\newcommand{\newSharedTheorem}[3]{
  \newaliascnt{#1}{#3}
  \newtheorem{#1}[#1]{#2}
  \aliascntresetthe{#1}
}
\newtheorem{thm}{Theorem}
\newtheorem*{obs}{Observation}
\theoremstyle{definition}
\newtheorem{defn}{Definition}
\theoremstyle{example}
\newtheorem*{expl}{Example}
\newcommand{\prob}[3]{p_{#3}(#1|#2)}
\newcommand{\reals}{{\mathbb{R}}}
\newcommand{\xs}{{\mathcal{X}}}
\newcommand{\ys}{{\mathcal{Y}}}
\newcommand{\normset}{{\mathcal{S}}}
\newcommand{\cexp}[1]{e^{#1}}
\newcommand{\ld}{\,\mathrm{d}}
\newcommand{\expect}{\mathbb{E}}
\newcommand{\cov}{\textrm{Cov}}
\newcommand{\mle}{{\hat{\eta}}}
\newcommand{\dmle}{{\hat{\eta}_\delta}}
\newcommand{\defeq}{\stackrel{\mathclap{\tiny\Delta}}{=}}
\newcommand{\hypercube}{\mathcal{H}}
\newcommand{\grad}{\nabla}
\newcommand{\mpunct}{\ }
\DeclareMathOperator*{\argmin}{arg\,min}
\DeclareMathOperator*{\argmax}{arg\,max}
\newcommand{\bigo}{\mathcal{O}}
\newcommand{\hypc}{\mathcal{H}}
\newcommand{\calS}{\mathcal{S}}
\newcommand{\calC}{\mathcal{C}}
\newcommand{\projC}{\Pi_{\calC}}
\newcommand{\LQR}{L^{2}\left(Q,~\R^{D}\right)}
\newcommand{\E}{\mathbb{E}}
\newcommand{\Var}{\mathrm{Var}}
\newcommand{\der}{\mathrm{d}}
\newcommand{\R}{\mathbb{R}}
\newcommand{\lspan}{\mathrm{span}}
\newcommand{\nml}{\left|\left|}
\newcommand{\nmr}{\right|\right|}
\newcommand{\VE}{V_{\mathrm{E}}}
\title{On the accuracy of self-normalized log-linear models}
\author{
  Jacob Andreas\thanks{Authors contributed equally.},~~~Maxim
  Rabinovich\footnotemark[1],~~~Dan Klein,~~Michael I.\ Jordan \\
Computer Science Division,
University of California, Berkeley \\
\texttt{\{jda,rabinovich,klein,jordan\}@cs.berkeley.edu} \\
}
\begin{document}

\maketitle

\begin{abstract}
  Calculation of the log-normalizer is a major computational obstacle in applications 
  of log-linear models with large output spaces. The problem of fast normalizer 
  computation has therefore attracted significant attention in the theoretical and
  applied machine learning literature. In this paper, we analyze a recently proposed
  technique known as ``self-normalization'', which introduces a regularization
  term in training to penalize log normalizers for deviating from zero.
  This makes it possible to use unnormalized model scores as approximate
  probabilities.
  Empirical evidence suggests that self-normalization is extremely effective,
  but a theoretical understanding of why it
  should work, and how generally it can be applied, is largely lacking.

  We prove generalization bounds on the estimated variance of normalizers
  and upper bounds on the loss in accuracy due to self-normalization, 
  describe classes of input distributions
  that self-normalize easily, and construct explicit examples of high-variance input
  distributions.
  Our theoretical results make predictions about the difficulty of fitting 
  self-normalized models to several classes of distributions, and we conclude
  with empirical validation of these predictions.
\end{abstract}

\section{Introduction}

Log-linear models, a general class that includes conditional random
fields (CRFs) and
generalized linear models (GLMs), offer a flexible yet tractable approach
modeling conditional probability distributions $p(x|y)$
\cite{Lafferty01CRF,McCullagh98GLM}. When the set of possible $y$ values is large, however,
the computational cost of computing a normalizing constant for each $x$ can be
prohibitive---involving a summation with many terms, a high-dimensional integral or an
expensive dynamic program.

The machine translation community has recently described several procedures for
training ``self-normalized'' log-linear models
\cite{Devlin2014NNJM,Vaswani13Neural}. The goal of self-normalization is to
choose model parameters that simultaneously yield accurate predictions and
produce normalizers clustered around unity. Model scores can then be used as
approximate surrogates for probabilities, obviating the computation normalizer
computation.

In particular, given a model of the form
\begin{equation}
  p_{\eta}(y~|~x) = e^{\eta^{T}T(y,~x) - A\left(\eta,~x\right)}
\end{equation}
with
\begin{equation}
  \label{eq:partition}
  A\left(\eta,~x\right) = \log \sum_{y \in \ys}{e^{\eta^{T}T(y,~x)}} \mpunct ,
\end{equation}
we seek a setting of $\eta$ such that $A(x, \eta)$ is
close enough to zero (with high probability under $p(x)$) to be ignored.

This paper aims to understand the theoretical properties of self-normalization.
Empirical results have already demonstrated the efficacy of this approach---for
discrete models with many output classes, it appears that normalizer values can
be made nearly constant without sacrificing too much predictive accuracy,
providing dramatic efficiency increases at minimal performance cost.

The broad applicability of self-normalization makes it likely to spread to other
large-scale applications of log-linear models, including structured prediction
(with combinatorially many output classes) and regression (with continuous
output spaces). But it is not obvious that we should expect such approaches to
be successful:
the number of inputs (if finite) can be on the order of millions, the geometry
of the resulting input vectors $x$ highly complex, and the class of functions
$A(\eta,~x)$ associated with different inputs quite rich. To find to find a
nontrivial parameter setting with $A(\eta,~x)$ roughly constant seems
challenging enough; to require that the corresponding $\eta$ also lead to good
classification results seems too much. And yet for
many input distributions that arise in practice, it appears possible to choose $\eta$
to make $A(\eta,~x)$ nearly constant without having to sacrifice classification
accuracy.  

Our goal is to bridge the gap between theoretical intuition and
practical experience. Previous work \cite{Andreas15SelfNorm}
bounds the sample complexity of self-normalizing training procedures
for a restricted class of models, but leaves open the question of how
self-normalization interacts with the predictive power of the learned
model.
This paper seeks to answer that question. We begin by generalizing
the previously-studied model to a much more general class
of distributions, including distributions with
continuous support (\autoref{sec:preliminaries}). Next, we
provide what we believe to be the first characterization of the interaction
between self-normalization and model accuracy \autoref{sec:lgap}. This
characterization is given from two perspectives:

\begin{itemize}

\item
a bound on the ``likelihood gap'' between self-normalized and unconstrained
models

\item
a conditional distribution provably hard to represent with a self-normalized
model
\end{itemize}

In \autoref{sec:experiments}, we present empirical evidence that these bounds
correctly characterize the difficulty of self-normalization, and in the
conclusion we survey a set of open problems that we believe merit further
investigation.

\section{Problem background}


The immediate motivation for this work is a procedure proposed to
speed up decoding in a machine translation system with a neural-network language
model \cite{Devlin2014NNJM}. The language model used is a standard feed-forward neural network, with a
``softmax'' output layer that turns the network's predictions into a
distribution over the vocabulary, where each probability is log-proportional to
its output activation. It is observed                                                                                    that with a sufficiently large
vocabulary, it becomes prohibitive to obtain probabilities from this model 
(which must be queried millions of times during decoding). To fix this, the
language model is trained with the following objective:
\begin{align}
  \label{eq:devlin}
  \max_W \sum_i \Big[ &N(y_i|x_i;W) - \log \sum_{y'} \cexp{N(y'|x_i;W)} \nonumber
  - \alpha \Big( \log \sum_{y'} \cexp{N(y'|x_i;W)}\Big)^2 \Big]
\end{align}
where $N(y|x;W)$ is the response of output $y$ in the neural net with weights
$W$ given an input $x$. 
From a Lagrangian perspective, the extra penalty term simply confines the
$W$ to the set of ``empirically normalizing'' parameters, for which all log-normalizers are close 
(in squared error) to the origin.
For a suitable choice of $\alpha$, it is observed that the
trained network is simultaneously accurate enough to produce good translations,
and close enough to self-normalized that the raw scores $N(y_i | x_i)$ can be used in
place of log-probabilities without substantial further degradation in quality.

We seek to understand the observed success of these models in finding accurate,
normalizing parameter settings.  While it is possible to derive
bounds of the kind we are interested in for general neural networks
\cite{Bartlett98NNSample}, in this paper we work with a simpler  linear 
parameterization that we believe captures the interesting aspects of this problem. 
\footnote{It is possible to view a log-linear model as a single-layer network with a
  softmax output. More usefully, all of the results presented here apply
  directly to trained neural nets in which the last layer only is \emph{retrained}
  to self-normalize \cite{Anthony09NNTF}.}

\subsection*{Related work}

The approach described at the beginning of this section is closely related to an alternative
self-normalization trick described based on noise-contrastive estimation (NCE) \cite{Gutmann10NCE}. 
NCE is an alternative to
direct optimization of likelihood, instead training a classifier to distinguish
between true samples from the model, and ``noise'' samples from some other
distribution.  The structure of the training objective makes it possible to
replace explicit computation of each log-normalizer with an estimate. In
traditional NCE, these values are treated as part of the parameter space, and
estimated simultaneously with the model parameters;
there exist
guarantees that the normalizer estimates will eventually converge to
their true values. It is instead possible to fix all of these estimates to one.
In this case, empirical evidence suggests that the resulting model will also
exhibit self-normalizing behavior \cite{Vaswani13Neural}.


A host of other techniques exist for solving the computational problem
posed by the log-normalizer. Many of these involve approximating the associated
sum or integral using quadrature \cite{OHagan91Quadrature}, herding
\cite{Chen12KernelHerding}, or Monte Carlo methods \cite{Doucet10SMC}.
For the special case of discrete, finite output spaces, an alternative
approach---the hierarchical softmax---is to replace the large sum in the
normalizer with a series of binary decisions
\cite{Morin05HierarchicalSoftmax}. The output classes are arranged in a binary
tree, and the probability of generating a particular output is the product of
probabilities along the edges leading to it. This reduces the cost of computing
the normalizer from $\bigo(k)$ to $\bigo(\log k)$. While this limits the set of
distributions that can be learned, and still requires greater-than-constant
time to compute normalizers, it appears to work well in practice. It cannot,
however, be applied to problems with continuous output spaces.


\section{Self-normalizable distributions}
\label{sec:preliminaries}
\label{sec:norm-dist}

We begin by providing a slightly more formal characterization of a general
log-linear model:
\begin{defn}[\bf Log-linear models]
  Given a space of \emph{inputs} $\xs$, a space of \emph{outputs} $\ys$, a
  measure $\mu$ on $\ys$, a nonnegative function $h : \ys \to \reals$, and a
  function $T : \xs \times \ys \to \reals^d$ that is $\mu$-measurable with
  respect to its second argument, we can define a \emph{log-linear model}
  indexed by parameters $\eta \in \reals^d$, with the form
  \begin{equation}
    \prob{y}{x}{\eta} = h(y) \cexp{\eta^\top T(x,y) - A(x,\eta)} \mpunct ,
  \end{equation}
  where
  \begin{equation}
    \label{eq:log-partition}
    A(x,\eta) \defeq \log \int_\ys h(y) \cexp{\eta^\top T(x,y)} \ld \mu(y) \mpunct .
  \end{equation}
  If $A(x,\eta) \leq \infty$, then $\int_y \prob{y}{x}{\eta} \ld \mu(y) = 1$,
  and $\prob{y}{x}{\eta}$ is a probability density over $\ys$.\footnote{Some
    readers may be more familiar with generalized linear models, which also
    describe exponential family distributions with a linear dependence on input.
    The presentation here is strictly more general, and has a few notational
    advantages: it makes explicit the dependence of $A$ on $x$ and $\eta$ but
    not $y$, and lets us avoid tedious bookkeeping involving natural and mean
    parameterizations. \cite{Yang12GLM}}
\end{defn}

We next formalize our notion of a self-normalized model.

\begin{defn}[\bf Self-normalized models]
  The log-linear model $\prob{y}{x}{\eta}$ is
  \emph{self-normalized with respect to} a set $\normset \subset \xs$ if for all $x \in
  \normset$, $A(x,\eta) = 0$. In this case we say that $\normset$ is
  \emph{self-normalizable}, and $\eta$ is \emph{self-normalizing} w.r.t.\ $\normset$.
\end{defn}

An example of a normalizable set is shown in \autoref{fig:normalizables}, and we
provide additional examples below:

\begin{figure}
\begin{subfigure}{0.48\textwidth}
  \center
  \vspace{2.8em}
  \includegraphics[width=\columnwidth]{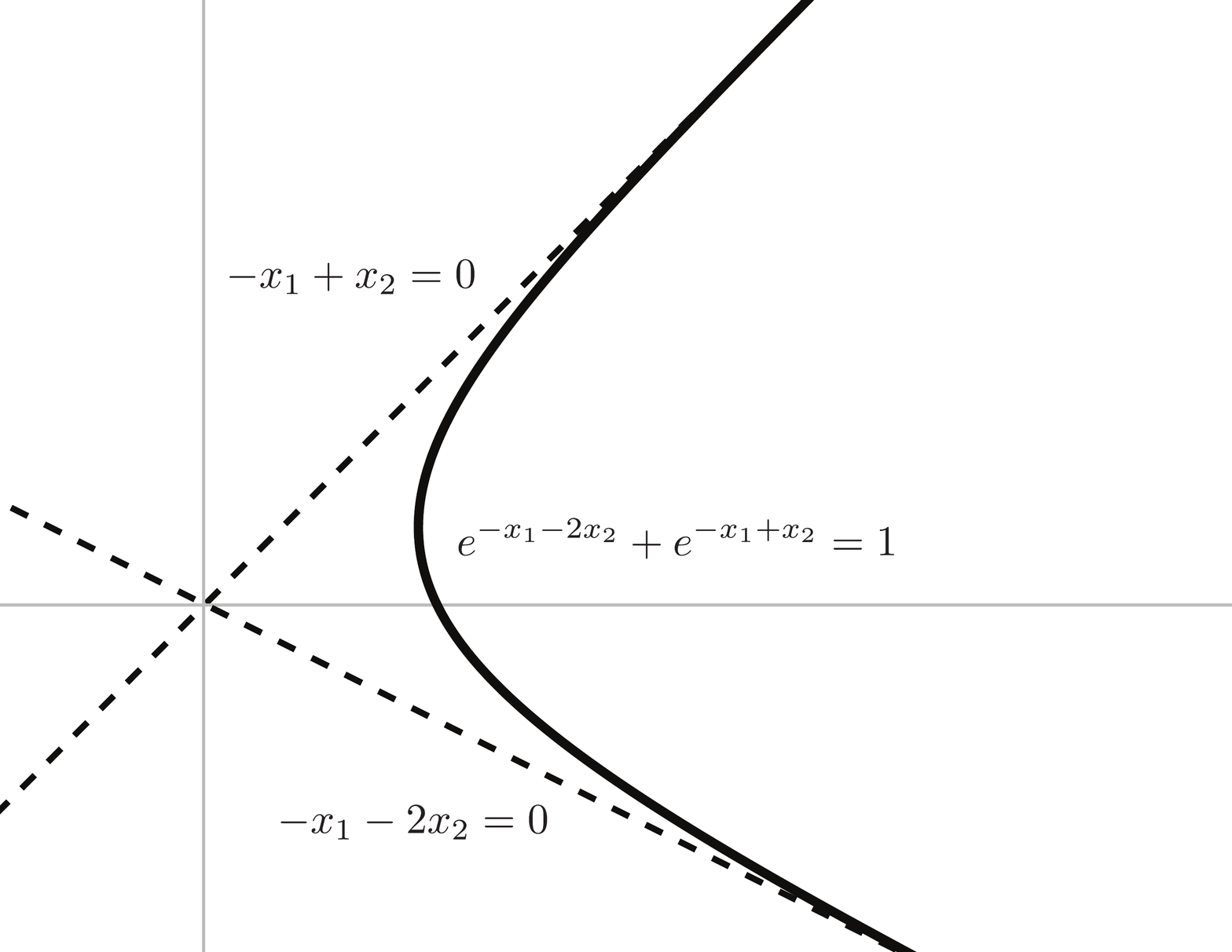}
  \caption{A \emph{self-normalizable} set $\normset$ for fixed $\eta$: 
  the solutions $(x_1, x_2)$ to $A(x, \eta) = 0$ with $\eta^\top
  T(x,y) = \eta_y^\top x$ and $\eta = \{ (-1, 1), (-1, -2) \}$. The set forms a
  smooth one-dimensional manifold bounded on either side by hyperplanes normal
to $(-1, 1)$ and $(-1, -2)$.}
  \label{fig:normalizables}
\end{subfigure}
\hfill
\begin{subfigure}{0.48\textwidth}
  \center
  \includegraphics[width=\columnwidth]{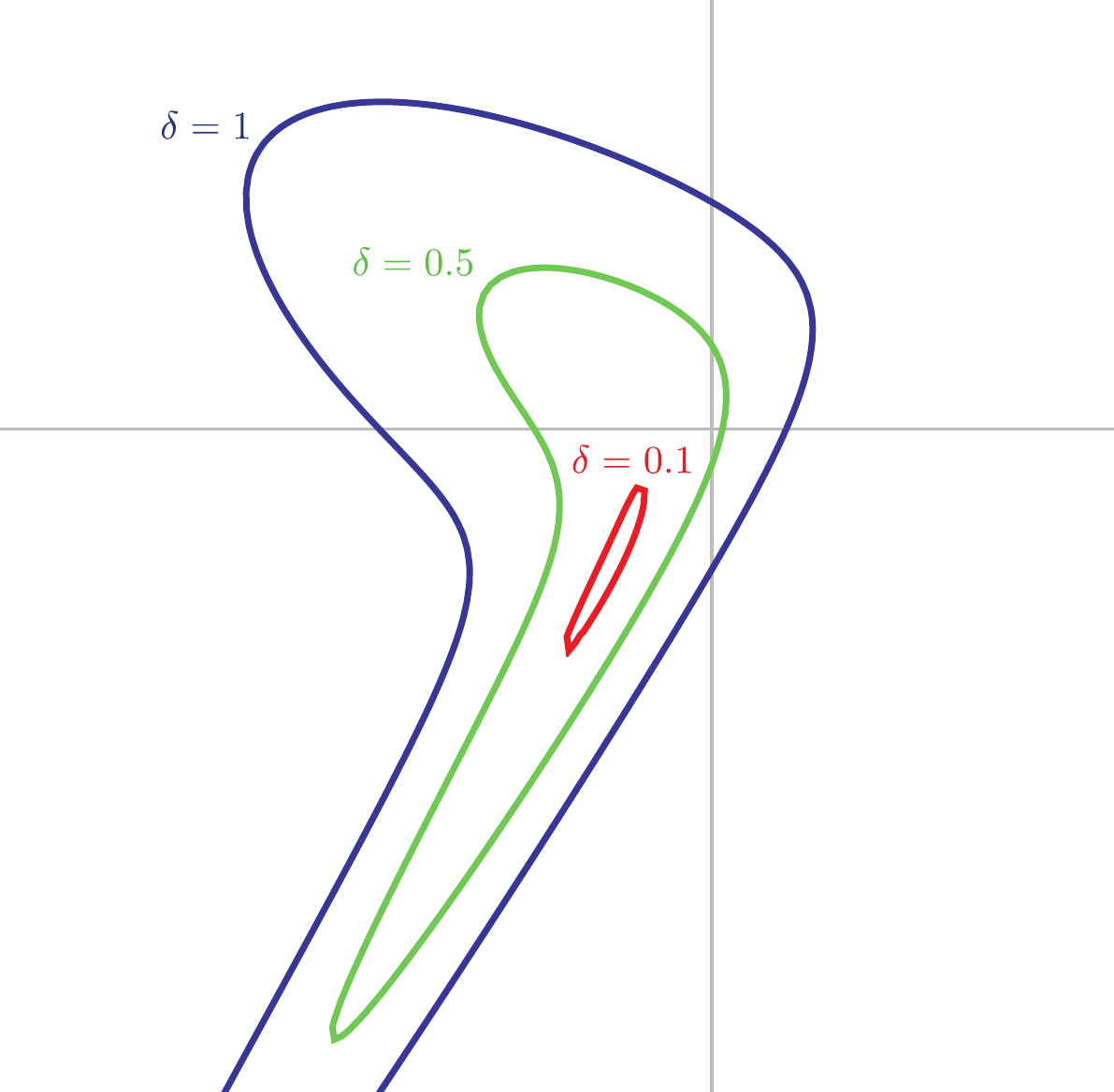}
  \caption{Sets of \emph{approximately normalizing} parameters $\eta$ for
  fixed $p(x)$: solutions $(\eta_1, \eta_2)$ to $\expect[ A(x, \eta)^2 ] =
  \delta^2$ with $T(x, y) = (x+y, -xy)$, $y \in \{-1, 1\}$ and $p(x)$ uniform on $\{1,
  2\}$. For a given upper bound on normalizer variance, the feasible set of
  parameters is nonconvex, and grows as $\delta$ increases.}
  \label{fig:normalizing}
\end{subfigure}
\caption{Self-normalizable data distributions and parameter sets.}
\end{figure}

    \pagebreak
\begin{expl}
  Suppose 
  \vspace{-.5em}
  \begin{align*}
    \normset &= \{ \log 2, -\log 2 \} \mpunct , \\
    \ys &= \{-1, 1\} \\
    T(x, y) &= [xy, 1] \\
    \eta &= (1, \log(2/5)) \mpunct . \\
    \intertext{Then for either $x \in \normset$,}
    A(x, \eta) &= \log(\cexp{\log 2 + \log(2/5)} + \cexp{-\log 2 + \log(2/5)})
    \\
    &= \log((2/5)(2 + 1/2)) \\
    &= 0 \mpunct, 
  \end{align*}
  and $\eta$ is self-normalizing with respect to $\normset$.
\end{expl}

It is also easy to choose parameters that do not result in a self-normalized distribution,
and in fact to construct a target distribution which cannot be self-normalized:

\begin{expl}
  Suppose
  \begin{align*}
    \xs &= \{(1, 0), (0, 1), (1, 1)\}  \\
    \ys &= \{-1, 1\} \\
    T(x, y) &= (x_1 y, x_2 y, 1)
  \end{align*}
  Then there is no $\eta$ such that $A(x, \eta) = 0$ for all $x$,
  and $A(x, \eta)$ is constant if and only if $\eta = \mathbf{0}$.
\end{expl}

As previously motivated, downstream uses of these models may be robust to small
errors resulting from improper normalization, so it would be useful to
generalize this definition of normalizable distributions to distributions that
are only approximately normalizable.  Exact normalizability of the conditional
distribution is a deterministic statement---there either does or does not exist
some $x$ that violates the constraint. In \autoref{fig:normalizables}, for
example, it suffices to have a single $x$ off of the indicated surface to make a
set non-normalizable.  Approximate normalizability, by contrast, is inherently a
\emph{probabilistic} statement, involving a distribution $p(x)$ over inputs.
Note carefully that we are attempting to represent $p(y|x)$ but have no
representation of (or control over) $p(x)$, and that approximate normalizability
depends on $p(x)$ but not $p(y|x)$.

Informally, if some input violates the self-normalization constraint by a large
margin, but occurs only very infrequently, there is no problem; instead we are
concerned with \emph{expected} deviation. It is also at this stage that the
distinction between penalization of the normalizer vs.\ log-normalizer becomes
important. The normalizer is necessarily bounded below by zero (so overestimates
might appear much worse than underestimates), while the log-normalizer is
unbounded in both directions. For most applications we are concerned with log
probabilities and log-odds ratios, for which an expected normalizer close to
zero is just as bad as one close to infinity. Thus the log-normalizer is the
natural choice of quantity to penalize.


\begin{defn}[\bf Approximately self-normalized models]
  The log-linear distribution $\prob{y}{x}{\eta}$ is
  \emph{$\delta$-approximately normalized with respect to} a distribution $p(x)$ over
    $\xs$ if $\expect[A(X,\eta)^2] < \delta^2$. In this case we say that $p(x)$ is
    \emph{$\delta$-approximately self-normalizable}, and $\eta$ is
    \emph{$\delta$-approximately self-normalizing}.
\end{defn}

The sets of $\delta$-approximately self-normalizing parameters for a fixed input
distribution and feature function are depicted in \autoref{fig:normalizing}.
Unlike self-normalizable sets of \emph{inputs}, self-normalizing and
approximately self-normalizing sets of \emph{parameters} may have complex geometry.


Throughout this paper, we will assume that vectors of sufficient statistics
$T(x,y)$ have bounded $\ell_2$ norm at most $R$, natural parameter vectors
$\eta$ have $\ell_2$ norm at most $B$ (that is, they are Ivanov-regularized),
and that vectors of both kinds lie in $\reals^d$. Finally,
we assume that all input vectors have a constant feature---in particular, that
$x_0 = 1$ for every $x$ (with corresponding weight $\eta_0$).
\footnote{It will occasionally be instructive to consider the special
case where $\xs$ is the Boolean hypercube, and we will explicitly note where
this assumption is made. Otherwise all results apply to general distributions,
both continuous and discrete.}


The first question we must answer is whether the problem of training self-normalized
models is feasible at all---that is, whether there exist any exactly
self-normalizable data distributions $p(x)$, or at least $\delta$-approximately
self-normalizable distributions for small $\delta$.  \autoref{sec:preliminaries}
already gave an example of an exactly normalizable distribution. In fact, there
are large classes of both exactly and approximately normalizable distributions.

\begin{obs}
  Given some fixed $\eta$, consider the set $S_\eta = \{ x \in \xs : A(x,\eta) =
  0 \}$. Any distribution $p(x)$ supported on $S_\eta$ is
  normalizable. Additionally, every self-normalizable distribution is characterized
  by at least one such $\eta$.
\end{obs}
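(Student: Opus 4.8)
The plan is to recognize that this observation is essentially a restatement of the definition of self-normalizability, and to prove the two claims as the two directions of an equivalence between \emph{being supported on a level set $S_\eta$} and \emph{being self-normalizable}.

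For the first claim, I would fix $\eta$ and assume $p(x)$ is supported on $S_\eta = \{x \in \xs : A(x,\eta) = 0\}$. Letting $\normset$ denote the support of $p$, every $x \in \normset$ lies in $S_\eta$ and therefore satisfies $A(x,\eta) = 0$ by construction. This is exactly the condition for the model $\prob{y}{x}{\eta}$ to be self-normalized with respect to $\normset$, so $\normset$ is self-normalizable and $\eta$ is self-normalizing. Hence $p$ is (exactly) self-normalizable, with $\eta$ serving as the witnessing parameter.

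For the second claim, I would take an arbitrary self-normalizable distribution $p$. By definition there exists some $\eta$ that is self-normalizing with respect to the support of $p$, i.e.\ $A(x,\eta) = 0$ for every $x$ in that support. But this says precisely that the support of $p$ is contained in $S_\eta$, which is what it means for $p$ to be supported on $S_\eta$. This $\eta$ is then the parameter that characterizes $p$ as self-normalizable, and the two claims together establish the asserted equivalence.

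The only place I would take care is in pinning down what \emph{supported on} means: since the normalization constraint $A(x,\eta)=0$ is a pointwise condition, I would read the support as the set of $x$ with positive density (so that its containment in $S_\eta$ is immediate), or, if the topological support is intended, invoke continuity of $A(\cdot,\eta)$ in its first argument to ensure $S_\eta$ is closed and the containment still holds. Apart from this minor bookkeeping I do not expect any genuine obstacle; the substance of the observation is simply that self-normalizability of a distribution and membership of its support in some zero level set $S_\eta$ are two names for the same condition.
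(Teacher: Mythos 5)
Your proposal is correct: both claims follow by directly unpacking the definition of a self-normalized model, exactly as the paper intends (it states this as an unnumbered Observation with no separate proof, since the containment of the support in the level set $S_\eta$ is definitionally equivalent to $\eta$ being self-normalizing w.r.t.\ that support). Your care about the meaning of ``supported on'' is reasonable but not load-bearing, since the paper treats the condition $A(x,\eta)=0$ pointwise on the support in precisely the way you do.
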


This definition provides a simple geometric
characterization of self-normalizable distributions. An example solution set is shown
in \autoref{fig:normalizables}. More generally, if $y$ is discrete and $T(x,y)$
consists of $\left|\ys\right|$ repetitions of a fixed feature function $t(x)$ (as in \autoref{fig:normalizables}), then we can write
\begin{equation}
  A(x, \eta) = \log \sum_{y \in \ys} \cexp{\eta_y^\top t(x)} .
\end{equation}
Provided $\eta_y^\top t(x)$ is convex in $x$ for each $\eta_y$, the level sets of
$A$ as a function of $x$ form the boundaries of convex sets. In particular, exactly
normalizable sets are always the boundaries of convex regions, as in the simple example
\autoref{fig:normalizables}.


We do not, in general, expect real-world datasets to be supported on the precise
class of self-normalizable surfaces. Nevertheless, it is very often observed
that data of practical interest lie on other low-dimensional manifolds within
their embedding feature spaces. Thus we can ask whether it is sufficient for a
target distribution to be well-approximated by a self-normalizing one. We begin
by constructing an appropriate measurement of the quality of this approximation.


\begin{defn}[\bf Closeness]
  An input distribution $p(x)$ is \emph{$D$-close} to a set $\normset$ if
  \begin{equation}
    \expect \left[ \inf_{x^* \in \normset} \sup_{y \in \ys} ||T(X,y) - T(x^*,y)||_2
            \right] \leq D
  \end{equation}
\end{defn}

In other words, $p(x)$ is $D$-close to $\normset$ if a random sample from $p$ is
no more than a distance $D$ from $\normset$ in expectation.
Now we can relate the quality of this approximation to the level of
self-normalization achieved. Generalizing a result from
\cite{Andreas15SelfNorm}, we have:

\begin{prop}
  \label{prop:approx-closeness}
  Suppose $p(x)$ is $D$-close to $\{ x : A(x,\eta) = 1 \}$.
  Then $p(x)$ is $BD$-approximately self-normalizable (recalling that $||x||_2 \leq
  B$).
\end{prop}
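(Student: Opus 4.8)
The plan is to control $A(X,\eta)$ pointwise by comparing it with its value on the self-normalizable set---where the normalizer equals one, i.e.\ where $A(x,\eta)=0$---and then to pass to the expectation via the $D$-closeness hypothesis. Fix $x$ in the support of $p$ and, for each $\eps>0$, pick a near-minimizer $x^\ast\in\normset$ with $\sup_{y\in\ys}||T(x,y)-T(x^\ast,y)||_2\le \mathrm{dist}(x)+\eps$, where $\mathrm{dist}(x):=\inf_{x^\ast\in\normset}\sup_{y}||T(x,y)-T(x^\ast,y)||_2$ is exactly the quantity appearing inside the closeness definition. Since $A(x^\ast,\eta)=0$, bounding $|A(x,\eta)|$ reduces to bounding the increment $|A(x,\eta)-A(x^\ast,\eta)|$; non-attainment of the infimum is harmless because I let $\eps\to0$ at the end.

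The technical heart is a Lipschitz estimate for the log-normalizer with respect to the feature map. Writing the increment as a log-ratio of integrals,
\[
A(x,\eta)-A(x^\ast,\eta)=\log\frac{\int_\ys h(y)\,\cexp{\eta^\top T(x,y)}\,\ld\mu(y)}{\int_\ys h(y)\,\cexp{\eta^\top T(x^\ast,y)}\,\ld\mu(y)},
\]
I would factor $\cexp{\eta^\top T(x,y)}=\cexp{\eta^\top T(x^\ast,y)}\,\cexp{\eta^\top(T(x,y)-T(x^\ast,y))}$ inside the numerator. Cauchy--Schwarz together with $||\eta||_2\le B$ bounds the extra exponent uniformly in $y$ by $B\sup_{y}||T(x,y)-T(x^\ast,y)||_2$, so the extra factor lies between $\cexp{-B\sup_y||\cdot||_2}$ and $\cexp{+B\sup_y||\cdot||_2}$. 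Pulling these scalars outside the integral sandwiches the ratio, and taking logarithms gives $|A(x,\eta)-A(x^\ast,\eta)|\le B\sup_{y}||T(x,y)-T(x^\ast,y)||_2\le B(\mathrm{dist}(x)+\eps)$. Sending $\eps\to0$ yields the clean pointwise bound $|A(x,\eta)|\le B\,\mathrm{dist}(x)$.

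It remains to square and take the expectation, which is the step I expect to require the most care. The pointwise bound gives $\expect[A(X,\eta)^2]\le B^2\,\expect[\mathrm{dist}(X)^2]$, and the target $(BD)^2$ follows \emph{exactly} once $\expect[\mathrm{dist}(X)^2]\le D^2$. The obstacle is that the closeness definition as written controls only the first moment $\expect[\mathrm{dist}(X)]\le D$, and Jensen's inequality runs the wrong way, so a high-variance $\mathrm{dist}(X)$ could inflate $\expect[\mathrm{dist}(X)^2]$ beyond $D^2$. The clean constant $BD$ therefore really wants closeness measured in the $L^2$ (root-mean-square) sense; accordingly I would read the closeness hypothesis with the squared distance inside the expectation, after which the Lipschitz estimate closes the argument immediately, or else settle for a weaker constant extracted from the almost-sure bound $\mathrm{dist}(X)\le 2R$ implied by $||T||_2\le R$. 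The parenthetical ``$||x||_2\le B$'' I read as the parameter-norm bound $||\eta||_2\le B$ that drives the Cauchy--Schwarz step.
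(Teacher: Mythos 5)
Your proposal is correct and follows essentially the same route as the paper's own proof: the paper decomposes $T(x,y)=T^*(x,y)+T^-(x,y)$ with $T^*$ the features of the nearest point of $\normset$, pulls the worst-case residual factor $\cexp{\eta^\top \tilde T}$ out of the integral, uses $\int \cexp{\eta^\top T^*(X,y)}\ld \mu(y)=1$ on the self-normalizable set, and bounds $|\eta^\top \tilde T|\le B\,\mathrm{dist}(X)$ by Cauchy--Schwarz, which is precisely your Lipschitz sandwich carried out under the expectation. The $L^1$/$L^2$ mismatch you flag at the end is genuine, but it is a defect of the paper rather than of your argument: the paper's final step asserts $\expect\bigl[(\eta^\top \tilde T)^2\bigr]=(DB)^2$, which silently requires $\expect[\mathrm{dist}(X)^2]\le D^2$ even though the closeness definition only gives $\expect[\mathrm{dist}(X)]\le D$, so the clean constant $BD$ indeed demands reading closeness in the root-mean-square sense, exactly as you observe.
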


(Proofs for this section may be found in \autoref{app:norm-dist}.)

The intuition here is that data distributions that place most of their mass in
feature space close to normalizable sets are approximately normalizable on the
same scale.

\section{Normalization and model accuracy}
\label{sec:lgap}
\label{sec:var-lower-bd}

So far our discussion has concerned the problem of finding conditional
distributions that self-normalize, without any concern for how well they actually
perform at modeling the data. Here the relationship between the approximately
self-normalized distribution and the true distribution $p(y|x)$ (which we have so far
ignored) is essential. Indeed, if we are not concerned with making a good model
it is always trivial to make a normalized one---simply take $\eta = \mathbf{0}$
and then scale $\eta_0$ appropriately!  We ultimately desire both good
self-normalization and good data likelihood, and in this section we characterize
the tradeoff between maximizing data likelihood and satisfying a
self-normalization constraint.

We achieve this characterization by measuring the \emph{likelihood gap} between
the classical maximum likelihood estimator, and the MLE subject to a
self-normalization constraint. Specifically, given pairs $((x_1, y_1),
(x_2, y_2), \dots, (x_n, y_n))$, let $\ell(\eta|x,y) = \sum_i \log
\prob{y_i}{x_i}{\eta}$. Then define
\begin{align}
  \mle &= \argmax_\eta \ell(\eta|x,y) \\
  \dmle &= \argmax_{\eta : V(\eta) \leq \delta} \ell(\eta|x,y)
\end{align}
(where $V(\eta) = \frac{1}{n} \sum_i A(x_i, \eta)^2$).

We would like to obtain a bound on the \emph{likelihood gap}, which we define as the quantity 
\begin{equation}
  \Delta_\ell(\mle, \dmle) = \frac{1}{n} (\ell(\mle|x,y) - \ell(\dmle|x,y)) \mpunct .
\end{equation}
We claim:
\begin{thm}
  \label{prop:lgap}
  Suppose $\ys$ has finite measure.
  Then asymptotically as $n \rightarrow \infty$
  \begin{equation}
    \hspace{-.4em}
    \Delta_\ell(\mle, \dmle) \leq \left(1 - \frac{\delta}{R||\mle||_2}\right)
    \expect\, \mathrm{KL}(\prob{\cdot}{X}{\eta}\ ||\ \mathrm{Unif})
    \mpunct .
  \end{equation}
\end{thm}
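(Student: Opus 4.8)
The plan is to bound the likelihood gap by comparing $\dmle$ against a single, carefully chosen feasible competitor obtained by shrinking $\mle$ toward the origin. Since $\dmle$ maximizes the empirical log-likelihood over the feasible set $\{\eta : V(\eta) \le \delta\}$, any feasible $\eta'$ satisfies $\ell(\dmle\,|\,x,y) \ge \ell(\eta'\,|\,x,y)$, so it suffices to exhibit one good feasible point. I would take $\eta' = \beta^\star \mle$ for a scalar $\beta^\star \in [0,1]$ and study the univariate restriction $\phi(\beta) = \frac{1}{n}\ell(\beta\mle \mid x,y)$. Because the log-partition $A(x,\cdot)$ is convex, $\phi$ is concave, with $\phi(1) = \frac{1}{n}\ell(\mle\,|\,x,y)$ the unconstrained optimum and $\phi(0)$ the value at the base (uniform) model $\eta = \mathbf{0}$. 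The quantity I must control is then $\Delta_\ell(\mle,\dmle) \le \phi(1) - \phi(\beta^\star)$.

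The first key step is to choose $\beta^\star$ as large as possible while keeping $\beta^\star\mle$ feasible. Here I would use that $\grad_\eta A(x,\eta) = \expect_{Y \sim \prob{\cdot}{x}{\eta}}[T(x,Y)]$ has norm at most $R$, so $A(x,\cdot)$ is $R$-Lipschitz. After normalizing the base measure so that the uniform model is exactly self-normalized (i.e.\ $A(x,\mathbf{0}) = 0$, which is where finiteness of the measure of $\ys$ enters), Lipschitzness gives $|A(x,\beta\mle)| = |A(x,\beta\mle) - A(x,\mathbf{0})| \le \beta R\|\mle\|_2$ pointwise. Choosing $\beta^\star = \delta/(R\|\mle\|_2)$ therefore forces every $|A(x_i,\beta^\star\mle)|$ below $\delta$, so $V(\beta^\star\mle) \le \delta^2$ and $\beta^\star\mle$ is feasible. (This presumes the regime $\delta \le R\|\mle\|_2$ in which the constraint binds; otherwise $\mle$ is itself feasible and the gap is zero.)

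The second key step is the chord bound together with an identification of the endpoints with a KL divergence. Concavity of $\phi$ gives $\phi(\beta) \ge (1-\beta)\phi(0) + \beta\phi(1)$, hence $\phi(1) - \phi(\beta^\star) \le (1-\beta^\star)\,(\phi(1) - \phi(0))$, which produces exactly the prefactor $1 - \delta/(R\|\mle\|_2)$. It then remains to show $\phi(1) - \phi(0)$ is the expected KL. With the base measure normalized so that $A(\cdot,\mathbf{0}) = 0$, we have $\phi(0) = 0$ (taking $h \equiv 1$) and $\phi(1) - \phi(0) = \frac{1}{n}\sum_i \big[\mle^\top T(x_i,y_i) - A(x_i,\mle)\big]$. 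Invoking the first-order (moment-matching) optimality condition for the unconstrained MLE, $\frac{1}{n}\sum_i T(x_i,y_i) = \frac{1}{n}\sum_i \grad_\eta A(x_i,\mle)$, I replace the empirical statistics by the model means, so each summand becomes $\mle^\top \grad_\eta A(x_i,\mle) - A(x_i,\mle) = \mathrm{KL}(\prob{\cdot}{x_i}{\mle} \,\|\, \mathrm{Unif})$. Thus $\phi(1) - \phi(0)$ is the empirical average of the conditional KL divergences, and as $n \to \infty$ the law of large numbers replaces it by $\expect\,\mathrm{KL}(\prob{\cdot}{X}{\eta} \,\|\, \mathrm{Unif})$; combined with the chord bound, this yields the claim.

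I expect the main obstacle to lie in the endpoint identification rather than the feasibility argument. One must verify that the unconstrained MLE genuinely satisfies the moment-matching condition (existence of an interior optimum, which is where the asymptotic regime is really used), track the role of the base function $h$ and the normalization $A(\cdot,\mathbf{0}) = 0$ carefully so that $\mathrm{Unif}$ is indeed the correct reference in the KL, and confirm that $\frac{1}{n}\sum_i \mathrm{KL}_i \to \expect\,\mathrm{KL}$ under the stated boundedness of $T$ and $\eta$. A secondary bookkeeping point is reconciling the $\delta$ versus $\delta^2$ scaling in the constraint $V(\eta) \le \delta$ so that $\beta^\star = \delta/(R\|\mle\|_2)$ produces precisely the stated prefactor.
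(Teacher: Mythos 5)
Your proposal is correct and takes essentially the same route as the paper's proof: both exhibit the scaled MLE $\alpha\mle$ with $\alpha = \delta/(R||\mle||_2)$ as a feasible competitor (feasibility via the bound $|A(x,\alpha\mle) - \log\mu(\ys)| \leq \alpha R ||\mle||_2$, which your Lipschitz argument reproduces), apply the convexity-of-$A$ chord bound to get the prefactor $1-\alpha$, and identify the endpoint gap with the expected KL from uniform in the $n \to \infty$ limit. The only minor refinement on your side is invoking the MLE moment-matching condition to rewrite the endpoint difference as an exact empirical average of conditional KL divergences before taking limits, where the paper passes directly from the empirical log-likelihood ratio to the expected KL via asymptotics; you also correctly flag the paper's implicit $\delta$-versus-$\delta^2$ and $\log\mu(\ys)$-versus-$0$ centering conventions, which your normalization $\mu(\ys)=1$ resolves cleanly.
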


(Proofs for this section may be found in \autoref{app:lgap}.)

This result lower-bounds the likelihood at $\dmle$ by
explicitly constructing a scaled version of $\mle$ that satisfies the
self-normalization constraint. Specifically, if $\eta$ is chosen so that normalizers
are penalized for distance from $\log \mu(\ys)$ (e.g.\ the logarithm of the
number of classes in the finite case), then any increase in $\eta$ along the
span of the data is guaranteed to increase the penalty. From here it is possible
to choose an $\alpha \in (0, 1)$ such that $\alpha\mle$ satisfies the
constraint. The likelihood at $\alpha\mle$ is necessarily less than
$\ell(\dmle|x,y)$, and can be used to obtain the desired lower bound.

Thus at one extreme, distributions close to uniform can be self-normalized with
little loss of likelihood. What about the other extreme---distributions ``as far
from uniform as possible''? With suitable
assumptions about the form of $\prob{y}{x}{\hat{\eta}}$, we can use the same
construction of a self-normalizing parameter to achieve an alternative
characterization for distributions that are close to deterministic:
\begin{prop}
  \label{prop:strong-lgap}
  Suppose that $\xs$ is a subset of the Boolean hypercube, $\ys$ is finite, and
  $T(x,y)$ is the conjunction of each element of $x$ with an indicator on the
  output class.  Suppose additionally that in \emph{every} input $x$,
  $\prob{y}{x}{\mle}$ makes a unique best prediction---that is, for each $x \in \xs$,
  there exists a unique $y^{\ast} \in \ys$ such that whenever $y \neq y^{\ast}$,
  $\eta^\top T(x,y^{\ast}) > \eta^\top T(x,y)$. Then
  \begin{equation}
    \Delta_\ell(\mle, \dmle) \leq b \left(||\eta||_2 - \frac{\delta}{R}\right)^2
    \cexp{-c \delta / R}
  \end{equation}
  for distribution-dependent constants $b$ and $c$.
\end{prop}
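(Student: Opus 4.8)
The plan is to reuse the scaling construction behind \autoref{prop:lgap}: rather than analyzing $\dmle$ directly, I exhibit a feasible competitor obtained by shrinking the unconstrained estimate toward the (uniform, exactly self-normalizing) origin and bound the likelihood it sacrifices. Concretely, set $g(\alpha) = \frac{1}{n}\ell(\alpha\mle\,|\,x,y)$. Because each $A(x,\cdot)$ is convex and $\ell$ is therefore concave, $g$ is concave in $\alpha$; and since $\mle$ satisfies the stationarity condition $\nabla\ell(\mle)=\mathbf{0}$, we get $g'(1) = \mle^\top\nabla\ell(\mle)/n = 0$, so $g$ is maximized at $\alpha=1$. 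The constant-feature gauge used for \autoref{prop:lgap} makes the all-uniform parameter exactly self-normalizing, so by the $R$-Lipschitz continuity of the log-normalizer in $\eta$ (with $\|T\|_2\le R$), the shrunk parameter $\alpha\mle$ satisfies $V(\alpha\mle)\le (R\alpha\|\mle\|_2)^2$ and is thus feasible whenever $\alpha\le\alpha^\ast := \delta/(R\|\mle\|_2)$. Feasibility of $\alpha^\ast\mle$ gives $\ell(\dmle)\ge\ell(\alpha^\ast\mle)$, hence $\Delta_\ell(\mle,\dmle)\le g(1) - g(\alpha^\ast)$.

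The new ingredient is a second-order control of this gap that exploits near-determinism. Since $g$ is concave with $g'(1)=0$, the Taylor remainder gives
\begin{equation*}
  g(1) - g(\alpha^\ast) \le \tfrac{1}{2}(1-\alpha^\ast)^2 \max_{s\in[\alpha^\ast,1]}\big(-g''(s)\big), \qquad -g''(s) = \frac{1}{n}\sum_i \Var_{\prob{\cdot}{x_i}{s\mle}}\big[\mle^\top T(x_i,Y)\big] \mpunct .
\end{equation*}
The point is that for a near-deterministic model this variance is exponentially small. The unique-best-prediction hypothesis supplies, for each $x_i$, a positive margin $\gamma_i = \min_{y\ne y^\ast}\big(\mle^\top T(x_i,y^\ast) - \mle^\top T(x_i,y)\big)$; writing $\gamma=\min_i\gamma_i$ and noting that scaling by $s$ multiplies every score gap by $s$ without moving the arg-max, I bound $\Pr_{\prob{\cdot}{x_i}{s\mle}}(Y\ne y^\ast)\le(|\ys|-1)e^{-s\gamma}$. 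Since the scores $\mle^\top T(x_i,y)$ lie in an interval of width at most $2R\|\mle\|_2$, the variance of the score is at most this width squared times the tail probability, so $-g''(s)\le 4R^2\|\mle\|_2^2(|\ys|-1)e^{-s\gamma}$, which is largest at the least-peaked endpoint $s=\alpha^\ast$.

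Assembling the pieces, $(1-\alpha^\ast)^2\|\mle\|_2^2 = (\|\mle\|_2 - \delta/R)^2$ and $e^{-\alpha^\ast\gamma} = e^{-(\gamma/\|\mle\|_2)\,\delta/R}$, so the claim follows with $b = 2R^2(|\ys|-1)$ and the distribution-dependent margin constant $c = \gamma/\|\mle\|_2$. I expect the main obstacle to be the variance estimate rather than the construction: one must convert the purely combinatorial unique-best-prediction assumption into a uniform exponential tail bound on the softmax along the whole shrinkage path $s\in[\alpha^\ast,1]$, verify that the worst case sits at $s=\alpha^\ast$, and check that the exponent $\alpha^\ast\gamma$ collapses exactly to the advertised $c\,\delta/R$. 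Some care is also needed in the feasibility step to confirm that the gauge rendering the uniform parameter self-normalizing (available because of the constant feature) leaves $\|\mle\|_2$ undisturbed, so that the shrinkage threshold is precisely $\delta/(R\|\mle\|_2)$.
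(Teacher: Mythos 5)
Your proposal is correct, and it follows the paper's overall skeleton: the same feasible competitor $\alpha^{\ast}\mle$ with $\alpha^{\ast} = \delta/(R||\mle||_2)$ (feasibility via \autoref{lem:shrinkage} and the constant-feature gauge), the same use of MLE stationarity to kill the first-order term, and the same idea that the remaining curvature is exponentially small under the margin assumption. Where you genuinely diverge is in how that curvature is controlled. The paper performs the second-order Taylor expansion in the full parameter space: the Hessian of the log-likelihood is the negative feature covariance matrix, each entry of which is bounded by $(k-1)\cexp{-c\beta||\mle||}$ (\autoref{prop:one-cov}), and the largest eigenvalue is then bounded via Gershgorin's theorem, picking up a factor of $q$ (the number of active features). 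You instead restrict the likelihood to the ray $\{s\mle\}$---legitimate, since the perturbation $\alpha^{\ast}\mle - \mle$ lies along that ray---so the relevant curvature $-g''(s)$ is the scalar quantity $\frac{1}{n}\sum_i \Var[\mle^\top T(x_i,Y)]$ under $\prob{\cdot}{x_i}{s\mle}$, which you bound directly by (score range)$^2$ times the tail probability $(|\ys|-1)\cexp{-s\gamma}$. Your route is more elementary (no matrix analysis, no Gershgorin), replaces the dimension factor $q$ by $2R^2$ (comparable on the hypercube, where $R \leq \sqrt{q}$), and correctly carries the $\tfrac{1}{2}$ in the Taylor remainder that the paper silently drops; the paper's eigenvalue bound is more general in that it controls perturbations in arbitrary directions, but that generality is unused here. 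Both arguments share the same caveats, which you appropriately flag: interiority/stationarity of $\mle$, and the recentering of normalizers about $\log \mu(\ys)$ via the constant feature so that the shrinkage threshold is exactly $\delta/(R||\mle||_2)$.
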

This result is obtained by representing the constrained likelihood with a
second-order Taylor expansion about the true MLE. All terms in the likelihood
gap vanish except for the remainder; this can be upper-bounded by the
$||\dmle||_2^2$ times the largest eigenvalue the feature covariance matrix at
$\dmle$, which in turn is bounded by $\cexp{-c\delta/R}$. 

The favorable rate we obtain for this case indicates that ``all-nonuniform''
distributions are also an easy class for self-normalization. Together with
\autoref{prop:lgap}, this suggests that hard distributions must have some mixture
of uniform and nonuniform predictions for different inputs. This is supported by
the results in \autoref{sec:var-lower-bd}.


The next question is whether there is a corresponding lower bound; that is,
whether there exist any conditional distributions for which all nearby
distributions are provably hard to self-normalize. The existence of a direct analog
of \autoref{prop:lgap} remains an open problem, but we make progress by
developing a general framework for analyzing normalizer variance.

One key issue is that while likelihoods are invariant to certain changes in the
natural parameters, the log normalizers (and therefore their variance) is far
from invariant. We therefore focus on equivalence classes of natural parameters,
as defined below. Throughout, we will assume a fixed
distribution $p(x)$ on the inputs $x$. 

\begin{defn}[\bf Equivalence of parameterizations]
Two natural parameter values $\eta$ and $\eta'$ are said to be {\it equivalent} (with respect to an input distribution $p(x)$), denoted $\eta \sim \eta'$ if 
$$ p_\eta(y|X) = p_{\eta'}(y|X) \quad \text{a.s. } p(x) $$

\end{defn}

We can then define the optimal log normalizer variance for the distribution associated with a natural parameter value. 

\begin{defn}[\bf Optimal variance]
We define the {\it optimal log normalizer variance} of the log-linear model associated with a natural parameter value $\eta$ by
$$ V^{\ast}(\eta) = \inf_{\eta' \sim \eta} \Var_{p(x)}\left[A(X,~\eta)\right] . $$
\end{defn}

We now specialize to the case where $\ys$ is finite with $\left|\ys\right| = K$ and where $T \colon \ys \times \xs \rightarrow \R^{Kd}$ satisfies
$$ T(k,~x)_{k'j} = \delta_{kk'}x_{j} . $$
This is an important special case that arises, for example, in multi-way logistic regression. In this setting, we can show that despite the
fundamental non-identifiability of the model, the variance can still be shown to be high under \emph{any} parameterization of the distribution.

\begin{thm}\label{thm:norm-var-lo}
Let $\xs = \lbrace 0,~ 1 \rbrace^{d}$ and let the input distribution $p(x)$ be uniform on $\xs$. There exists an $\eta^{0} \in \R^{Kd}$ such that for $\eta = \alpha\eta^{0}$, $\alpha > 0$,
$$ V^{\ast}(\eta) \geq \frac{\left|\left|\eta\right|\right|_{2}^{2}}{32d(d - 1)} - 4Ke^{-\frac{\sqrt{1 - \frac{1}{d}}\left|\left|\eta\right|\right|_{2}}{2(d - 1)}}\left|\left|\eta\right|\right|_{2} . $$ 
\end{thm}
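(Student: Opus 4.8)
The plan is to first collapse the infimum over the equivalence class $\{\eta'\sim\eta\}$ down to an infimum over a single affine degree of freedom, and then to lower-bound the resulting variance by exactly the part of $A(\cdot,\eta)$ that no affine function can cancel. Because $p(x)$ is uniform on $\xs=\{0,1\}^d$, its support spans $\R^d$; writing $\eta=(\eta_1,\dots,\eta_K)$ so that $\eta^\top T(k,x)=\eta_k^\top x$, the softmax is invariant \emph{exactly} under a common shift $\eta_k\mapsto\eta_k+v$, and since the support spans $\R^d$ this is the only freedom. Hence $\eta'\sim\eta$ iff $\eta'_k=\eta_k+v$ for a single $v\in\R^d$, and as $A(x,\eta+(v,\dots,v))=v^\top x+A(x,\eta)$ we get
$$ V^{\ast}(\eta)=\inf_{v\in\R^d}\Var_{p(x)}\!\left[v^\top X+A(X,\eta)\right]. $$
Expanding any $g\colon\{0,1\}^d\to\R$ in the Walsh basis $\{\chi_S\}$ gives $\Var(g)=\sum_{S\neq\emptyset}\widehat g(S)^2$, and $v^\top X$ only alters coefficients with $|S|\le1$; as $v$ ranges over $\R^d$ these cancel exactly, so $V^{\ast}(\eta)=\sum_{|S|\ge2}\widehat{A}(S)^2$. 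Using $\binom{|S|}{2}\le\binom{d}{2}$ together with $\|D_iD_jA\|_2^2=\sum_{S\supseteq\{i,j\}}\widehat A(S)^2$ (where $D_i$ is the discrete derivative in coordinate $i$, which annihilates affine functions) yields the clean, $v$-free bound
$$ V^{\ast}(\eta)\ \ge\ \frac{2}{d(d-1)}\sum_{i<j}\bigl\|D_iD_jA(\cdot,\eta)\bigr\|_2^2 . $$
It is this (lossy) passage from the exact high-degree Fourier weight to a sum over coordinate \emph{pairs} that introduces the $1/(d(d-1))$ factor in the theorem.

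For the construction I would take $\eta^{0}=(w,-w,0,\dots,0)\in\R^{Kd}$ for a balanced weight vector $w$ to be chosen, so that $\eta=\alpha\eta^0$ gives $A(x,\eta)=\log\!\bigl(2\cosh(\alpha w^\top x)+(K-2)\bigr)=\alpha\,|w^\top x|+E(x)$. Away from the crease $\{w^\top x=0\}$ the correction is exponentially small, $0\le E(x)\le Ke^{-\alpha\gamma_0}$ with $\gamma_0=\min_{x:\,w^\top x\neq0}|w^\top x|$; substituting $\alpha=\|\eta\|_2/\|\eta^0\|_2$ turns this into the $4K\,e^{-(\cdots)}\|\eta\|_2$ error term, and balancing $\gamma_0$ against $\|w\|_2$ is what fixes the $d$-dependence $\sqrt{1-1/d}/(2(d-1))$ in the exponent. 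The leading contribution comes from the piecewise-linear $|w^\top x|$, whose mixed second differences are supported near the crease; computing $\sum_{i<j}\|D_iD_j|w^\top x|\|_2^2$, then multiplying by $\alpha^2$ and $\tfrac{2}{d(d-1)}$, produces the main term $\|\eta\|_2^2/\bigl(32\,d(d-1)\bigr)$ for the appropriate normalization of $w$.

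To assemble the bound I would write $D_iD_jA=\alpha\,D_iD_j|w^\top X|+D_iD_jE$ and use $\|a+b\|_2^2\ge\|a\|_2^2-2\|a\|_2\|b\|_2$, summed over pairs with Cauchy--Schwarz: the pair-sum for $A$ is at least the pair-sum for $\alpha|w^\top X|$ minus a cross term of order $\alpha\,Ke^{-\alpha\gamma_0}$, which is precisely the shape of the claimed error. The hard part will be the construction step: one must choose $w$ so that $|w^\top x|$ simultaneously retains enough degree-$\ge2$ (``pairwise'') Fourier weight \emph{and} keeps the minimal nonzero value $\gamma_0$ large enough that $E$ is uniformly negligible, and one must bound the second differences of $E$ inside the thin band where $|w^\top x|$ is small (including the unavoidable point $x=\mathbf 0$). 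Reconciling these two competing demands — nonlinear content wants $w$ concentrated, a large gap wants $w$ spread out — is the crux of the argument and is what ultimately determines both the exponent and the constant $32$.
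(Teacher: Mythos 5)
Your first three moves mirror the paper's proof: equivalence as a common shift (\autoref{lem:eta-equiv-lin}), $V^\ast$ as squared distance from $\tilde{A}$ to the span of centered linear functions (\autoref{lem:Vast-proj}; your Walsh identity $V^\ast(\eta)=\sum_{|S|\ge 2}\widehat{A}(S)^2$ is that lemma specialized to the uniform hypercube), and the substitution of the piecewise-linear maximum for $A$ up to an exponentially small error (\autoref{lem:AEinf-bd}). Where you genuinely diverge is the core estimate: the paper lower-bounds $\inf_\beta \E[(E_\infty(\eta)(X)-\beta^\top X)^2]$ by a direct counting argument over complement pairs $\{x,x^{\neg}\}$ (\autoref{lem:Einf2-bd}), whereas you interpose the inequality $V^\ast \ge \frac{2}{d(d-1)}\sum_{i<j}\|D_iD_jA\|_2^2$. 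That step is valid but lossy by a factor of up to $\binom{d}{2}$ (it is tight only when all Fourier weight sits at degree $d$), and for the construction you propose it destroys the constant: if $w$ is concentrated on one coordinate so that the crease of $|w^\top x|$ crosses the bulk of the cube (say $w_1=-\tfrac{1}{2}$, $w_j=\tfrac{1}{2(d-1)}$ for $j\ge 2$), then $|w^\top x|$ is \emph{exactly} a degree-two multilinear polynomial, its degree-$\ge 2$ weight is $\tfrac{1}{16(d-1)}$, and the pair-sum route returns only a main term of order $\|\eta\|_2^2/\bigl(d^2(d-1)\bigr)$ --- a factor of order $d$ short of the claimed $\|\eta\|_2^2/\bigl(32d(d-1)\bigr)$. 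Your exact identity alone gives order $\|\eta\|_2^2/d$ for this $w$, so the step you single out as ``what introduces the $1/(d(d-1))$ factor'' is in fact the step you would need to delete.

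The two remaining gaps are ones you name but do not close, and they are the crux rather than details. First, $w$ is never pinned down: the paper's \autoref{lem:Einf2-bd} exists precisely to verify that one explicit $\eta^{0}$ (equations \eqref{eq:eta-hard-1}--\eqref{eq:eta-hard-2}) simultaneously has enough nonlinear content and enough margin, and your plan defers exactly that verification. Second, the crease cannot be absorbed into the stated error term: at any $x$ with $w^\top x=0$ (at least $x=\mathbf{0}$) one has $A(\alpha\eta^{0},x)=\log K$ for \emph{every} $\alpha$ while $\alpha|w^\top x|=0$, so $E$ is not $O(Ke^{-\alpha\gamma_0})$ there; its mixed differences then carry entries of size $\Theta(\log K)$ independent of $\alpha$, and your cross term becomes linear in $\alpha$ with an $\alpha$-independent coefficient of order $2^{-d/2}\mathrm{poly}(d)\log K$. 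Such a term cannot be folded into the theorem's error term, which decays exponentially in $\|\eta\|_2$; at best your assembly yields $V^\ast(\alpha\eta^{0})\ge c_1\alpha^2/\mathrm{poly}(d) - c_2(d,K)$ with $c_2$ constant in $\alpha$, a weaker statement. (In fairness, the paper's own application of \autoref{lem:AEinf-bd} silently ignores the same tie at $x=\mathbf{0}$, where its margin hypothesis fails; but the paper completes every other step, while in your proposal the unspecified $w$ and the unresolved crease sit exactly where the theorem's constants must be produced.)
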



\section{Experiments}

The high-level intuition behind the results in the preceding section can be
summarized as follows: 1) for predictive distributions that are in expectation
high-entropy or low-entropy, self-normalization results in a relatively small
likelihood gap; 2) for mixtures of high- and low-entropy distributions,
self-normalization may result in a large likelihood gap.  More generally, we
expect that an increased tolerance for normalizer variance will be associated
with a decreased likelihood gap.

In this section we provide experimental confirmation of these predictions. We
begin by generating a set of random sparse feature vectors, and an initial
weight vector $\eta_0$.  In order to produce a sequence of label distributions
that smoothly interpolate between low-entropy and high-entropy, we introduce a
temperature parameter $\tau$, and for various settings of $\tau$ draw labels
from $p_{\tau \eta}$. We then fit a self-normalized model to these training
pairs. In addition to the synthetic data, we compare our results to empirical
data \cite{Devlin2014NNJM} from a self-normalized language model.

\autoref{fig:tradeoff} plots the tradeoff between the likelihood gap and the
error in the normalizer, under various distributions (characterized by their KL
from uniform). Here the tradeoff between self-normalization and model accuracy
can be seen---as the normalization constraint is relaxed, the likelihood gap
decreases.

\autoref{fig:fixed_delta} shows how the likelihood gap varies as a function of
the quantity $\expect KL(p_\eta(\cdot|X)||\textrm{Unif})$. As predicted, it can
be seen that both extremes of this quantity result in small likelihood gaps,
while intermediate values result in large likelihood gaps.

\begin{figure}
\begin{subfigure}{0.48\textwidth}
  \center
  {
  \footnotesize
  $\Delta_\ell$
  \includegraphics[width=.929\columnwidth,valign=c]{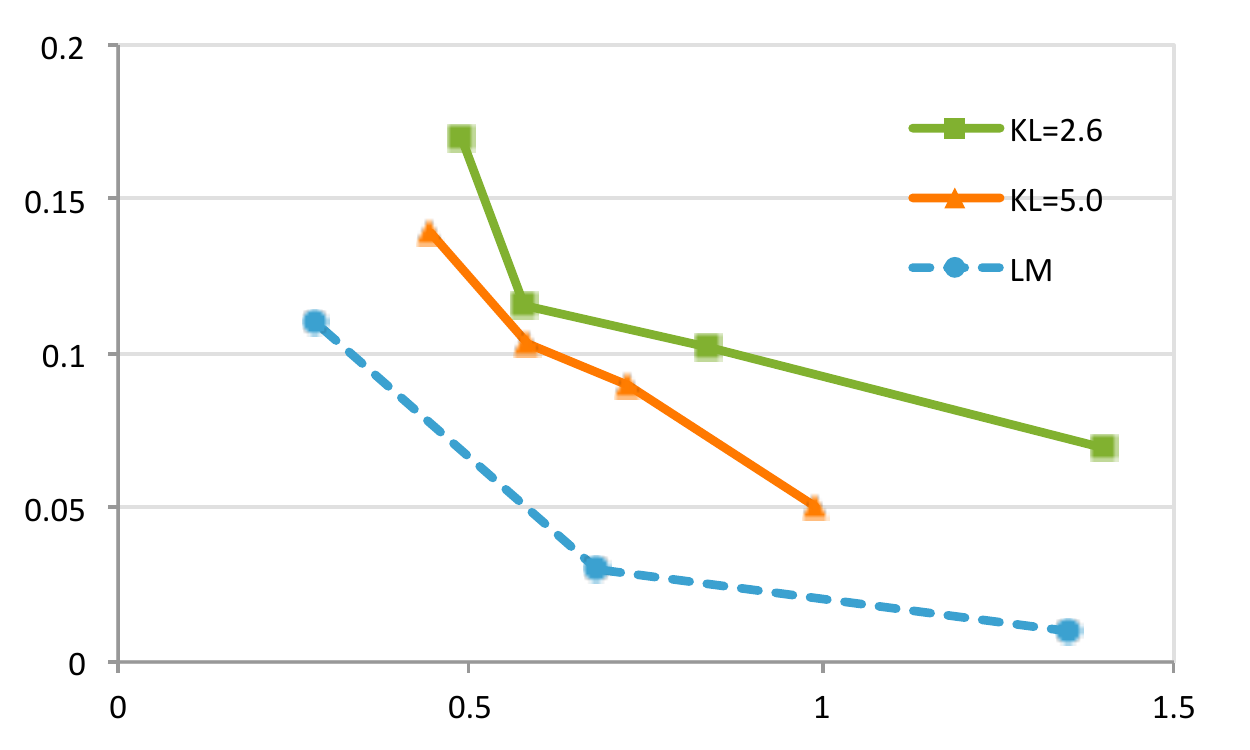} \\
  $\qquad\quad \delta$
  }
  \caption{Normalization / likelihood tradeoff. As the normalization
  constraint $\delta$ is relaxed, the likelihood gap $\Delta_\ell$ decreases.
  Lines marked ``KL='' are from synthetic data; the line marked ``LM'' is from
  \cite{Devlin2014NNJM}.}
  \label{fig:tradeoff}
\end{subfigure}
\hfill
\begin{subfigure}{0.48\textwidth}
  \center
  {
  \footnotesize
  $\Delta_\ell$
  \includegraphics[width=.9\columnwidth,valign=c]{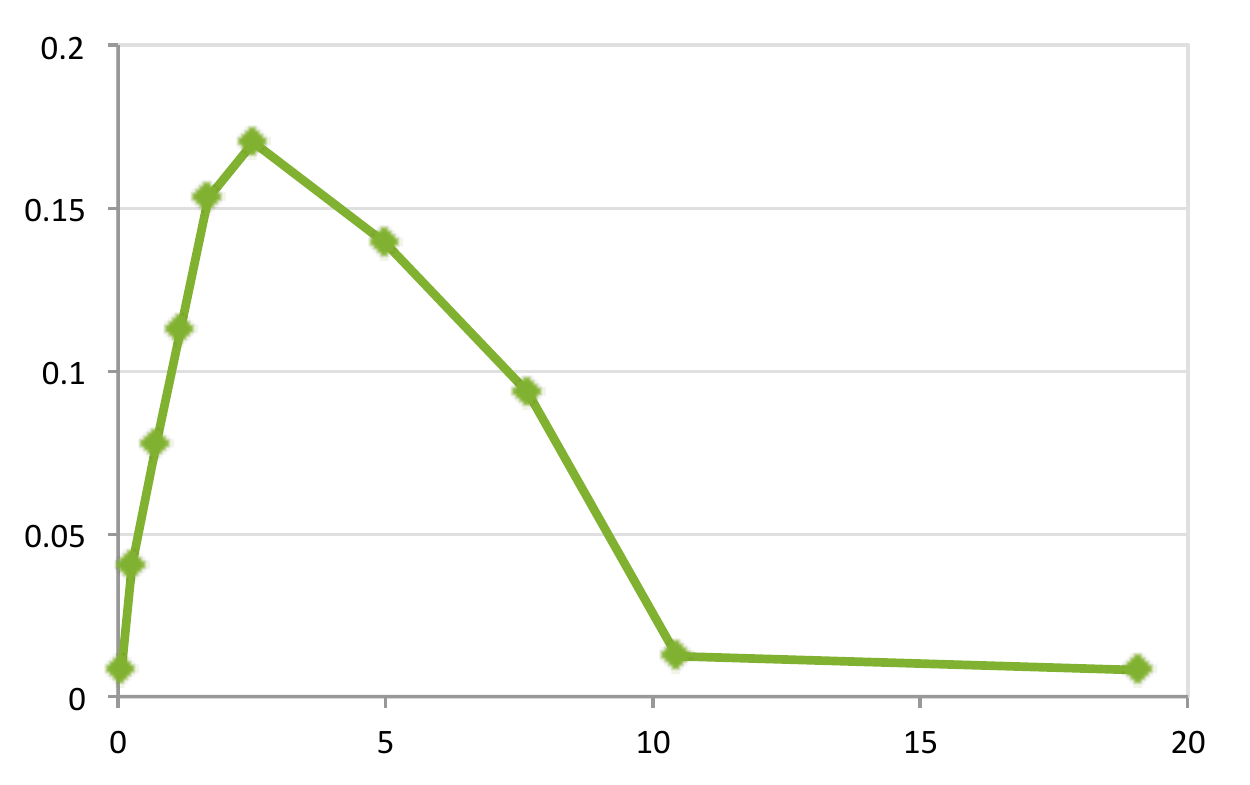} \\
  $\qquad\quad \expect KL(p_\eta||\textrm{Unif})$
  }
  \caption{Likelihood gap as a function of expected divergence from the uniform
  distribution.  As predicted by theory, the likelihood gap increases, then
  decreases, as predictive distributions become more peaked.}
  \label{fig:fixed_delta}
\end{subfigure}
\caption{Experimental results}
\end{figure}

%
%
%
%

\label{sec:experiments}

\section{Conclusions}

Motivated by the empirical success of self-normalizing parameter estimation
procedures for log-linear models, we have attempted to establish a theoretical
basis for the understanding of such procedures. We have characterized both
self-normalizable \emph{distributions}, by constructing provably easy examples,
and self-normalizing \emph{training procedures}, by bounding the loss of
likelihood associated with self-normalization.

While we have addressed many of the important first-line theoretical questions
around self-normalization, this study of the problem is by no means complete. We
hope this family of problems will attract further study in the larger machine
learning community; toward that end, we provide the following list of open
questions:

\begin{enumerate}
  \item {\bf How else can the approximately self-normalizable distributions be
    characterized?} The class of approximately normalizable distributions we
    have described is unlikely to correspond perfectly to real-world data. We
    expect that \autoref{prop:approx-closeness} can be generalized to other
    parametric classes, and relaxed to accommodate spectral or sparsity
    conditions.

  \item {\bf Are the upper bounds in \autoref{prop:lgap} or
    \autoref{prop:strong-lgap} tight?} 
    Our constructions involve relating the normalization constraint to the
    $\ell_2$ norm of $\eta$, but in general some parameters can have very large
    norm and still give rise to almost-normalized distributions. 

  \item {\bf Do corresponding lower bounds exist?} While it is easy to construct
    of exactly self-normalizable distributions (which suffer no loss
    of likelihood), we have empirical evidence that hard distributions also
    exist. It would be
    useful to lower-bound the loss of likelihood in terms of some simple
    property of the target distribution.

  \item {\bf Is the hard distribution in \autoref{thm:norm-var-lo} stable?} This
    is related to the previous question. The existence of high-variance
    distributions is less worrisome if such distributions are comparatively
    rare. If the variance lower bound falls off quickly as the given
    construction is perturbed, then the associated distribution may still be
    approximately self-normalizable with a good rate.
\end{enumerate}

We have already seen that new theoretical insights in this domain can translate
directly into practical applications. Thus, in addition to their inherent
theoretical interest, answers to each of these questions might be applied
directly to the training of approximately self-normalized models in practice. We
expect that self-normalization will find increasingly many applications, and we
hope the results in this paper provide a first step toward a complete
theoretical and empirical understanding of self-normalization in log-linear models.

\newpage

\bibliography{jacob}
\bibliographystyle{plain}

\onecolumn

\appendix

\section{Normalizable distributions}
\label{app:norm-dist}

\begin{proof}[Proof of \autoref{prop:approx-closeness} (distributions close to
  normalizable sets are approximately normalizable)]
  \strut\\[1em]
  Let $T(x,y) = T^*(x,y) + T^-(x,y)$, where 
  $\displaystyle T^*(x,y) = \argmin_{T(x,y) : x \in \normset} ||T(X,y) - T(x,y)||_2$ .

  Then,
  \begin{align*}
    \expect\left(\log\left( \int \cexp{\eta^\top T(X,y)} \ld y \right) \right)^2
    &= \expect\left(\log\left( \int \cexp{\eta^\top (T^*(X,y) + T^-(X,y)}) \ld y
    \right) \right)^2 \\
    &\leq \expect\left(\log\left( \cexp{\eta^\top \tilde{T}} \int
    \cexp{\eta^\top T^*(X,y)} \ld y \right) \right)^2 \\
    \intertext{for $\displaystyle \tilde{T} = \argmax_{T(X,y)} ||\eta^\top T(X,y)||_2$,}
    &\leq \expect\left(\log\left( \cexp{\eta^\top \tilde{T}} \right)\right)^2 \\
    &= (DB)^2 \qedhere
  \end{align*}
\end{proof}

\section{Normalization and likelihood}
\label{app:lgap}

\subsection{General bound}


\begin{lem}
  \label{lem:shrinkage}
  If $||\eta||_2 \leq \delta / R$, then $\prob{y}{x}{\eta}$ is
  $\delta$-approximately normalized about $\log \mu(\ys)$.
\end{lem}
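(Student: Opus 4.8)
The plan is to prove the stronger, \emph{pointwise} statement that $|A(x,\eta) - \log\mu(\ys)| \le \delta$ holds for every $x$, and then recover the claimed bound on $\expect[(A(X,\eta) - \log\mu(\ys))^2]$ simply by squaring and integrating against $p(x)$. Because the pointwise bound will not reference the input distribution at all, the conclusion holds for any $p(x)$, which is exactly what one wants in order to deploy this lemma inside the shrinkage construction of \autoref{prop:lgap}.

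First I would control the exponent using the norm hypotheses. By Cauchy--Schwarz together with the standing assumption $\|T(x,y)\|_2 \le R$ and the hypothesis $\|\eta\|_2 \le \delta/R$,
\[
  \left|\eta^\top T(x,y)\right| \;\le\; \|\eta\|_2\,\|T(x,y)\|_2 \;\le\; \frac{\delta}{R}\cdot R \;=\; \delta
\]
for all $x$ and all $y \in \ys$. Consequently $\cexp{\eta^\top T(x,y)}$ is trapped between $\cexp{-\delta}$ and $\cexp{\delta}$, uniformly in both arguments.

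Next I would substitute this two-sided bound into the definition \eqref{eq:log-partition} of the log-normalizer. Taking $h \equiv 1$, so that the reference value is $A(x,\mathbf{0}) = \log\int_\ys \ld\mu(y) = \log\mu(\ys)$, the bound on the integrand gives
\[
  \cexp{-\delta}\,\mu(\ys) \;\le\; \int_\ys \cexp{\eta^\top T(x,y)}\ld\mu(y) \;\le\; \cexp{\delta}\,\mu(\ys) \mpunct ,
\]
where the integrals are finite because $\ys$ has finite measure and the integrand is bounded. Taking logarithms yields $\log\mu(\ys) - \delta \le A(x,\eta) \le \log\mu(\ys) + \delta$, i.e.\ $|A(x,\eta) - \log\mu(\ys)| \le \delta$ pointwise.

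Finally, squaring this inequality and taking the expectation over $p(x)$ gives $\expect[(A(X,\eta) - \log\mu(\ys))^2] \le \delta^2$, which is precisely the claim that $\prob{y}{x}{\eta}$ is $\delta$-approximately normalized about $\log\mu(\ys)$. I do not expect a genuine obstacle here; the only points deserving care are the bookkeeping that identifies $\log\mu(\ys)$ with the normalizer at $\eta = \mathbf{0}$ (which is where the finite-measure hypothesis on $\ys$ enters, guaranteeing this constant is well-defined), and the observation that the derived bound is uniform in $x$ and hence survives integration against an \emph{arbitrary} $p(x)$.
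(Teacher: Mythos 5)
Your proof is correct and follows essentially the same route as the paper's: Cauchy--Schwarz gives the pointwise bound $|\eta^\top T(x,y)| \le \|\eta\|_2 R \le \delta$, which sandwiches $A(x,\eta)$ within $\delta$ of $\log\mu(\ys)$ uniformly in $x$, and the expectation bound follows immediately. The only cosmetic difference is that the paper splits into the cases $A(x,\eta) \ge \log\mu(\ys)$ and $A(x,\eta) \le \log\mu(\ys)$ where you use a single two-sided sandwich, and you make explicit the bookkeeping ($h \equiv 1$, finiteness of $\mu(\ys)$) that the paper leaves implicit.
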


\begin{proof}
  If $\int \cexp{\eta^\top T(X,y)} \ld \mu(y) \geq \log \mu(\ys)$,
  \begin{align*}
    \left(\log \int_\ys \cexp{\eta^\top T(X,y)} \ld \mu(y) - \log \mu(\ys)\right)^2
    &\leq \left(\log \int_\ys \cexp{||\eta||_2 R} \ld \mu(y) - \log
    \mu(\ys)\right)^2 \\
    &= ||\eta||^2_2 R^2 \\
    &\leq \delta^2
  \end{align*}
  The case where $\int \cexp{\eta^\top T(X,y)} \ld \mu(y) \leq \log \mu(\ys)$ is
  analogous, instead replacing $\eta^\top T(x,y)$ with $-||\eta||_2 R$. The
  variance result follows from the fact that every log-partition is within
  $\delta$ of the mean.
\end{proof}

\begin{proof}[Proof of \autoref{prop:lgap} (loss of likelihood is bounded in
  terms of distance from uniform)]
  Consider the likelihood evaluated at $\alpha\mle$, where $\alpha = \delta /
  R||\mle||_2$. We know that $0 \leq \alpha \leq 1$ (if $\delta > R\eta$, then
  the MLE already satisfying the normalizing constraint). Additionally,
  $\prob{y}{x}{\alpha\mle}$ is $\delta$-approximately normalized. (Both follow
  from \autoref{lem:shrinkage}.)

  Then,
  \begin{align*}
    \Delta_\ell &= \frac{1}{n} \sum_i \left[ (\mle^\top T(x_i, y_i) - A(x_i, \mle)) - 
                                 (\alpha\mle^\top T(x_i, y_i) - A(x_i,
                                 \alpha\mle)) \right] \\
                &= \frac{1}{n} \sum_i \left[
                                 (1-\alpha)\mle^\top T(x_i, y_i)
                                 - A(x_i, \mle) + A(x_i, \alpha\mle)
                                \right] \\
    \intertext{Because $A(x, \alpha \eta)$ is convex in $\alpha$,}
    A(x_i, \alpha\mle) &\leq (1-\alpha)A(x_i, \mathbf{0}) + \alpha A(x_i, \mle)
    \\
    &= (1-\alpha)\mu(\ys) + \alpha A(x_i, \mle)
    \intertext{Thus,}
    \Delta_\ell &= \frac{1}{n} \sum_i \left[
                                 (1-\alpha)\mle^\top T(x_i, y_i)
                                 - A(x_i, \mle) + (1-\alpha)\log \mu(\ys) + \alpha
                                 A(x_i, \mle)
                                \right] \\
                &= (1-\alpha) \frac{1}{n} \sum_i \left[
                                 \mle^\top T(x_i, y_i)
                                 - A(x_i, \mle) + \log \mu(\ys)
                                \right] \\
                &= (1-\alpha) \frac{1}{n} \sum_i \left[
                  \log \prob{y}{x}{\eta} - \log \mathrm{Unif}(y)
                              \right] \\
    &\asymp (1-\alpha)
    ~
    \expect\, \mathrm{KL}(\prob{\cdot}{X}{\eta}\ ||\ \mathrm{Unif}) \\
    &\leq \left(1 - \frac{\delta}{R||\mle||_2}\right)
    ~
    \expect\, \mathrm{KL}(\prob{\cdot}{X}{\eta}\ ||\ \mathrm{Unif})
    \qedhere
  \end{align*}
\end{proof}

\subsection{All-nonuniform bound}

We make the following assumptions:
\begin{itemize}
  \item
    Labels $y$ are discrete. That is, $\ys = \{1, 2, \dots, k\}$ for some $k$.
  \item
    $x \in \hypercube(d)$. That is, each $x$ is a $\{0,1\}$ indicator vector drawn
    from the Boolean hypercube in $q$ dimensions.
  \item
    Joint feature vectors $T(x,y)$ are just the features of $x$ conjoined with
    the label $y$. Then it is possible to think of $\eta$ as a sequence of
    vectors, one per class, and we can write $\eta^\top T(x,y) = \eta_y^\top x$.

  \item
    As in the body text, let all MLE predictions be nonuniform, and in
    particular let each $\mle_{y^*}^\top x - \mle_y^\top x > c||\mle||$ for $y
    \neq y^*$. 
\end{itemize}

\begin{lem}
  \label{prop:one-cov}
  For a fixed $x$, the maximum covariance between any two features $x_i$
  and $x_j$ under the model evaluated at some $\eta$ in the direction of the MLE:
  \begin{equation}
    \cov[T(X,Y)_i, T(X,Y)_j|X=x] \leq 2(k-1)\cexp{-c\delta}
  \end{equation}
\end{lem}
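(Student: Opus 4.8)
The plan is to exploit the fact that, conditioned on $X = x$, all randomness lives in the label $Y \sim \prob{\cdot}{x}{\eta}$, so the joint feature covariance collapses to a covariance between multinomial indicators. Under the conjunction feature map each coordinate of $T$ has the form $T(X,Y)_i = \mathbf{1}\{Y = a\}\,x_s$ for some class $a$ and input coordinate $s$, and since $x$ is fixed and lies in $\hypercube(d)$ (so that $x_s x_t \in \{0,1\}$), for a second coordinate $T(X,Y)_j = \mathbf{1}\{Y = b\}\,x_t$ we have
\begin{equation*}
\cov[T(X,Y)_i, T(X,Y)_j \mid X = x] = x_s x_t\,\cov[\mathbf{1}\{Y=a\},\, \mathbf{1}\{Y=b\}] .
\end{equation*}
The indicator covariance equals $p_a(1 - p_a)$ when $a = b$ and $-p_a p_b \leq 0$ when $a \neq b$, where $p_a = \Pr[Y = a \mid X = x]$. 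Because $|x_s x_t| \leq 1$, the largest the covariance can be is $\max_a p_a(1-p_a)$, so it suffices to bound this quantity by $2(k-1)\cexp{-c\delta}$.

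First I would show that the conditional distribution is sharply peaked at the best class $y^*$. The margin assumption $\mle_{y^*}^\top x - \mle_y^\top x > c\nml\mle\nmr$ is scale-covariant: evaluating at an $\eta$ in the direction of $\mle$ rescales both sides by the same factor, so the same margin (now $c\nml\eta\nmr$, which at the constraint boundary satisfies $\nml\eta\nmr \asymp \delta/R$) holds at $\eta$. Writing $p_a$ as a softmax and bounding each non-maximal exponent by the margin then gives $p_{y^*} \geq (1 + (k-1)\cexp{-c\delta})^{-1}$, hence $1 - p_{y^*} \leq (k-1)\cexp{-c\delta}$, and for every $a \neq y^*$, $p_a \leq \cexp{-c\delta}$.

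Combining these estimates, $p_{y^*}(1 - p_{y^*}) \leq (k-1)\cexp{-c\delta}$, while $p_a(1-p_a) \leq p_a \leq \cexp{-c\delta}$ for $a \neq y^*$, so in either case $\max_a p_a(1-p_a) \leq (k-1)\cexp{-c\delta}$; the remaining factor of two is slack, absorbed in passing from the softmax denominator bound to the stated inequality and in folding $R$ and $\nml\mle\nmr$ into the constant $c$.

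The reduction to indicator covariances is routine. The step that requires care is the softmax peakedness estimate — in particular, verifying that the single-class margin condition yields a bound on $1 - p_{y^*}$ that is \emph{uniform} over all $k-1$ competing classes, and tracking how the margin transforms when we pass from $\mle$ to the constraint-boundary parameter $\eta$, so that the exponent is correctly expressed in terms of $\delta$ rather than $\nml\eta\nmr$. This bookkeeping, rather than any deep inequality, is where the main effort lies.
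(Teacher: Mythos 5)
Your proposal is correct and takes essentially the same approach as the paper: both reduce the conditional feature covariance to the Bernoulli variance terms $p_a(1-p_a)$ of the label distribution and bound these via the margin condition, which makes the softmax exponentially peaked (majority class contributing at most $(k-1)\cexp{-c||\eta||}$, each minority class at most $\cexp{-c||\eta||}$, with $||\eta|| \asymp \delta/R$ at the constraint boundary). The only substantive difference is bookkeeping: the paper upper-bounds the covariance by the sum $\sum_y p_\eta(y|x)(1 - p_\eta(y|x))$ over all classes, which is exactly where the stated factor $2(k-1)$ comes from, whereas you bound it by $\max_a p_a(1-p_a)$ after correctly observing that cross-class indicator covariances are negative --- a slightly tighter accounting that explains why the factor of $2$ in the lemma is slack.
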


\begin{proof}
  If either $i$ or $j$ is not associated with the class $y$, or associated with
  a zero element of $x$, then the associated feature (and thus the covariance at
  $(i,j)$) is identically zero.  Thus we assume that $i$ and $j$ are both
  associated with $y$ and correspond to nonzero elements of $x$.

  \begin{align*}
    \cov[T_i,T_j|X=x] &= \sum_y p_\eta(y|x) - p_\eta(y|x)^2 \\
    \intertext{Suppose $y$ is the majority class. Then,}
    p_\eta(y|x) - p_\eta(y|x)^2 &= \frac{\cexp{\eta_y^\top x}}{\sum_{y'}
    \cexp{\eta_{y'}^\top x}} - \frac{\cexp{2\eta_y^\top x}}{\left(\sum_{y'}
    \cexp{\eta_{y'}^\top x}\right)^2} \\
    &= \frac{\cexp{\eta_y^\top x} \left( \sum_{y'}\cexp{\eta_{y'}^\top x} \right)
    - \cexp{2 \eta_y^\top x}}{\left(\sum_{y'} \cexp{\eta_{y'}^\top x}\right)^2}
    \\
    &\leq \frac{\cexp{\eta_y^\top x} \left( \sum_{y'}\cexp{\eta_{y'}^\top x} \right)
    - \cexp{2 \eta_y^\top x}}{\cexp{2\eta_{y}^\top x}} \\
    &= \sum_{y' \neq y} \cexp{(\eta_y' - \eta_y)^\top x} \\
    &\leq (k-1)\cexp{-c||\eta||} \\
    \intertext{Now suppose $y$ is not in the majority class. Then,}
    p_\eta(y|x) - p_\eta(y|x)^2 &\leq p(y|x) \\
    &= \frac{\cexp{\eta_y^\top x}}{\sum_{y'} \cexp{\eta_{y'}^\top x}} \\
    &\leq \cexp{-c||\eta||}
    \intertext{Thus the covariance}
    \sum_y p_\eta(y|x) - p_\eta(y|x)^2 &\leq 2(k-1)\cexp{-c||\eta|||}
  \end{align*}
\end{proof}

\begin{lem}
  Suppose $\eta = \beta\mle$ for some $\beta < 1$. Then for a
  sequence of observations $(x_1, \dots, x_n)$, under the model evaluated at
  $\xi$, the largest eigenvalue of the feature covariance matrix
  \begin{equation}
    \label{eq:full-cov}
    \frac{1}{n} \sum_i \left[ \expect_\xi[T T^\top|X = x_i] -
    (\expect_\theta[T|X = x_i])(\expect_\xi[T|X = x_i])^\top \right]
  \end{equation}
  is at most
  \begin{equation}
    q (k-1) e^{-c\beta||\mle||}
  \end{equation}
\end{lem}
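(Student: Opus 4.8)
The plan is to bound the largest eigenvalue of the averaged matrix $M$ in~\eqref{eq:full-cov} by its trace, and then to evaluate that trace using the conjunctive indicator structure of the features together with \autoref{prop:one-cov}. First I would fix the parameter ambiguity in~\eqref{eq:full-cov} by reading both inner expectations at the same point, i.e.\ taking $\theta = \xi = \eta = \beta\mle$; with that reading each summand is a genuine conditional covariance $\cov_\eta[T \mid X = x_i]$, so $M$ is an average of symmetric positive semidefinite matrices and is therefore itself PSD. For a PSD matrix all eigenvalues are nonnegative, hence $\lambda_{\max}(M) \le \operatorname{tr}(M)$, and since the trace is linear it suffices to bound $\operatorname{tr}(\cov_\eta[T \mid X = x_i])$ uniformly in $i$.

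Next I would exploit the feature structure. Writing the $(y,j)$ coordinate of $T(x,Y)$ as $x_j\,\mathbf{1}[Y=y]$ and using $x_j \in \{0,1\}$ (so $x_j^2 = x_j$), the per-example trace factorizes as
\begin{equation}
\operatorname{tr}\big(\cov_\eta[T \mid X = x]\big) = \Big(\sum_j x_j\Big)\Big(\sum_y \big(p_\eta(y\mid x) - p_\eta(y\mid x)^2\big)\Big).
\end{equation}
The first factor counts the active coordinates of $x$ and is at most $q$. The second factor is exactly the quantity controlled in the proof of \autoref{prop:one-cov}: evaluating at $\eta = \beta\mle$, the all-nonuniform margin $\mle_{y^*}^\top x - \mle_y^\top x > c\|\mle\|$ scales to $\eta_{y^*}^\top x - \eta_y^\top x > c\beta\|\mle\| = c\|\eta\|$, so the labels are exponentially peaked and $\sum_y\big(p_\eta(y\mid x) - p_\eta(y\mid x)^2\big) \le (k-1)e^{-c\beta\|\mle\|}$. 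Combining the two factors and averaging over $i$ gives $\lambda_{\max}(M) \le \operatorname{tr}(M) \le q(k-1)e^{-c\beta\|\mle\|}$.

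I expect the main obstacle to be careful bookkeeping rather than any deep inequality. The two substantive points are: (i) justifying that $M$ is a true covariance, which is precisely where the apparent $\theta/\xi$ discrepancy in~\eqref{eq:full-cov} must be resolved, since mismatched parameters would break positive semidefiniteness and invalidate the bound $\lambda_{\max}\le\operatorname{tr}$; and (ii) verifying the trace factorization, which relies on the Boolean assumption $x_j\in\{0,1\}$ and on distinct classes contributing disjoint blocks of $T$. Once these are in place the quantitative content reduces entirely to \autoref{prop:one-cov}, and the only place a constant must be watched is whether the label-covariance trace retains the majority-class contribution; this affects at most the leading constant and not the exponential rate $e^{-c\beta\|\mle\|}$ that drives the conclusion.
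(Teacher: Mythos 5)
Your proof is correct, but it takes a genuinely different route from the paper's. The paper bounds every \emph{entry} of the covariance matrix via \autoref{prop:one-cov}, observes that at most $q$ entries in any row are nonzero, and invokes Gershgorin's circle theorem to convert the entry-wise bound plus row sparsity into an eigenvalue bound; averaging is handled by noting the bound holds for each term of the sum. You instead use the spectral inequality $\lambda_{\max}(M) \leq \operatorname{tr}(M)$ for positive semidefinite $M$, plus linearity of the trace, so you only ever need the \emph{diagonal} (variance) entries, which factor cleanly as $\left(\sum_j x_j\right)\sum_y \left(p_\eta(y|x) - p_\eta(y|x)^2\right)$ under the Boolean conjunction features. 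Your route buys two things: it sidesteps the off-diagonal covariance bounds entirely, and it forces you to resolve the $\theta/\xi$ mismatch in \eqref{eq:full-cov} explicitly (mismatched parameters would destroy positive semidefiniteness, which Gershgorin does not need but your trace inequality does); the paper silently treats the matrix as a single-parameter covariance. One caveat cuts equally against both arguments: the sum $\sum_y \left(p_\eta(y|x) - p_\eta(y|x)^2\right)$ is bounded by $2(k-1)\cexp{-c\beta||\mle||}$, not $(k-1)\cexp{-c\beta||\mle||}$, since the majority class and the minority classes each contribute up to $(k-1)\cexp{-c\beta||\mle||}$ --- and indeed \autoref{prop:one-cov} as stated carries the constant $2(k-1)$, so the paper's Gershgorin step also yields $2q(k-1)\cexp{-c\beta||\mle||}$ rather than the stated $q(k-1)\cexp{-c\beta||\mle||}$. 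You flag this factor of $2$ yourself, and it affects neither proof's exponential rate, which is all that matters for \autoref{prop:strong-lgap}.
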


\begin{proof}
  From \autoref{prop:one-cov}, each entry in the covariance matrix is at most
  $(k-1)\cexp{-c||\eta||} = (k-1)\cexp{-c\beta||\mle||}$. At most $q$ features
  are nonzero active in any row of the matrix.
  Thus by Gershgorin's theorem, the maximum eigenvalue
  of each term in \autoref{eq:full-cov} is $q(k-1)\cexp{-c\beta||\mle||}$, which
  is also an upper bound on the sum.
\end{proof}

\begin{proof}[Proof of \autoref{prop:strong-lgap} (loss of likelihood goes as
  $\cexp{-\delta}$)]
  As before, let us choose $\dmle = \alpha\mle$, with $\alpha =
  \delta/R||\mle||_2$. We have already seen that this choice of parameter is
  normalizing.

  Taking a second-order Taylor expansion about $\eta$, we have
  \begin{align*}
    \log p_{\dmle}(y|x) &= \log p_\eta(y|x) + (\dmle - \mle)^\top
    \grad \log p_\mle(y|x)
    + (\dmle - \mle)^\top \grad\grad^\top \log p_\xi(y|x) (\dmle -
    \mle) \\
   &= \log p_\mle(y|x) + (\dmle - \mle)^\top \grad\grad^\top \log
   p_\xi(y|x) (\dmle - \mle) \\
   \intertext{where the first-order term vanishes because $\mle$ is the MLE.
     It is a standard result for exponential families that the Hessian in the
     second-order term is just \autoref{eq:full-cov}.
     Thus we can write} 
     &\geq \log p_\mle(y|x) - ||\dmle - \mle||^2 q(k-1)\cexp{-c\beta||\eta||}
     \\
     &\geq \log p_\mle(y|x) - (1-\alpha)^2||\mle||^2
     q(k-1)\cexp{-c\alpha||\eta||} \\
     &= \log p_\mle(y|x) - (||\mle|| - \delta/R)^2 q(k-1)\cexp{-c\delta/R}
  \end{align*}
  The proposition follows.
\end{proof}

\section{Variance lower bound}
\label{app:var-lower-bd}

Let
$$ U_0 = \lbrace \beta \in \R^{Kd} \colon \exists \tilde{\beta} \in \R^{d}, \beta_{kj} = \tilde{\beta}_{j},~1 \leq k \leq K,~ 1 \leq j \leq d \rbrace . $$

\begin{lem}\label{lem:eta-equiv-lin}
If $\lspan\left(\xs\right) = \R^{d}$, then equivalence of natural parameters is characterized by
$$ \eta \sim \eta' \Longleftrightarrow \eta - \eta' \in U_0 . $$
\end{lem}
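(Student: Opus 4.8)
The plan is to establish the two directions of the equivalence separately, using the fact that under this feature map the conditional is a softmax whose class-$k$ logit is exactly $\eta_k^\top x$, and that a softmax is determined entirely by the pairwise differences of its logits. Throughout I write $\eta_k \in \R^d$ for the $k$-th block of $\eta$, so that $\eta^\top T(k,x) = \eta_k^\top x$ and $p_\eta(k|x) = e^{\eta_k^\top x} / \sum_{k'} e^{\eta_{k'}^\top x}$.

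For the direction $\eta - \eta' \in U_0 \Rightarrow \eta \sim \eta'$, I would suppose $\eta_k - \eta'_k = \tilde\beta$ for a common $\tilde\beta \in \R^d$ and every $k$. Substituting into the softmax, each logit $\eta_k^\top x$ equals $(\eta'_k)^\top x + \tilde\beta^\top x$; since the added term $\tilde\beta^\top x$ does not depend on $k$, it cancels between numerator and denominator, so $p_\eta(\cdot|x) = p_{\eta'}(\cdot|x)$ for every $x$, hence a.s., giving $\eta \sim \eta'$.

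For the converse, I would start from the observation that $\eta \sim \eta'$ forces the pairwise log-odds to agree almost surely: for each $k,l$,
$$ (\eta_k - \eta_l)^\top x = \log \frac{p_\eta(k|x)}{p_\eta(l|x)} = \log \frac{p_{\eta'}(k|x)}{p_{\eta'}(l|x)} = (\eta'_k - \eta'_l)^\top x \quad \text{a.s. } p(x) . $$
Setting $v_k = \eta_k - \eta'_k$, this rearranges to $(v_k - v_l)^\top x = 0$ a.s.\ for every pair $k,l$, and it remains to upgrade this to the pointwise identity $v_k = v_l$.

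The crux --- and the step I expect to demand the most care --- is precisely this upgrade, where the almost-sure qualifier interacts with the span hypothesis. If $v_k - v_l \neq 0$ for some pair, then $p(x)$ is supported on the hyperplane $\{ x : (v_k - v_l)^\top x = 0 \}$, a linear subspace of dimension $d-1$; its span is itself and therefore cannot equal $\R^d$, contradicting $\lspan(\xs) = \R^d$. I would phrase this so that the measure-zero exceptional set in the almost-sure statement is absorbed into the support, leaving a support that still spans $\R^d$. Hence $v_k = v_l$ for all pairs, so all blocks of $\eta - \eta'$ equal a common vector, i.e.\ $\eta - \eta' \in U_0$, completing the converse.
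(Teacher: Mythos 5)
Your proof is correct and takes essentially the same route as the paper's: the forward direction extracts the pairwise log-odds identity $\left(\eta_{k} - \eta_{k'}\right)^{\top}x = \left(\eta'_{k} - \eta'_{k'}\right)^{\top}x$ and uses $\lspan(\xs) = \R^{d}$ to conclude that all blocks of $\eta - \eta'$ share a common value, and the converse observes that a common shift $\tilde{\beta}^{\top}x$ cancels (equivalently, that $A(\eta',x) = A(\eta,x) + \tilde{\beta}^{\top}x$), exactly as in the paper. If anything, your closed-hyperplane/support argument handles the almost-sure qualifier more carefully than the paper, which simply reads the equivalence as holding at every $x \in \xs$ (harmless in the intended application, where $p(x)$ is uniform on a finite $\xs$ so that almost-sure equality is pointwise equality).
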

\begin{proof}
For $x \in \xs$, denote by $P_{\eta}(x) \in \Delta_{K}$ the distribution over $\ys$. Now, suppose that $\eta \sim \eta'$ and fix $x \in \xs$. By the definition of equivalence, we have
$$ \frac{P_{\eta}(x)_{k}}{P_{\eta}(x)_{k'}} = \frac{P_{\eta'}(x)_{k}}{P_{\eta'}(x)_{k'}} , $$
which immediately implies
$$ \left(\eta_{k} - \eta_{k'}\right)^{T}x = \left(\eta'_{k} - \eta'_{k'}\right)^{T} x  , $$
whence
$$ \left[\left(\eta_{k} - \eta'_{k}\right) - \left(\eta_{k'} - \eta'_{k'}\right)\right]^{T}x = 0 . $$
Since this holds for all $x \in \xs$ and $\lspan(\xs) = \R^{d}$, we get
$$ \eta_{k} - \eta'_{k} = \eta_{k'} - \eta'_{k'} . $$
That is, if we define
$$ \tilde{\beta}_{j} = \eta_{1j} - \eta'_{1j}, $$
we get
$$ \eta_{kj} - \eta'_{kj} = \eta_{1d} - \eta'_{1j} = \tilde{\beta}_{j}, $$
and $\eta - \eta' \in U_0$, as required.

Conversely, if $\eta - \eta' \in U_0$, choose an appropriate $\tilde{\beta}$. We then get
$$ \eta^{T}_{k}x = \left(\eta'\right)^{T}x + \tilde{\beta}^{T}x . $$
It follows that
$$ A(\eta', x) = A(\eta,~x) + \tilde{\beta}^{T}x, $$
so that
$$ \eta^{T}T(k,~x) - A(\eta,~x) = \left(\eta'\right)^{T}x + \tilde{\beta}^{T}x - \left[A(\eta',~x) + \tilde{\beta}^{T}x\right] = \left(\eta'\right)^{T}x - A(\eta',~x) $$
and the claim follows.
\end{proof}

The key tool we use to prove the theorem reinterprets $V^{\ast}(\eta)$ as the norm of an orthogonal projection. We believe this may be of independent interest. To set it up, let $\calS = \LQR$ be the Hilbert space of square-integrable functions with respect to the
input distribution $p(x)$, define
$$ w_{j}(x) = x_j - \E_{p(x)}\left[X_j\right] $$
and
$$ \calC =\lspan\left(w_{j}\right)_{1 \leq j \leq d} . $$
We then have

\begin{lem}\label{lem:Vast-proj}
Let $\tilde{A}(\eta,~x) = A(\eta,~x) - \E_{p(x)}\left[A(\eta,~X)\right]$. Then
$$ V^{\ast}(\eta) = \nml \tilde{A}(\eta,~ \cdot) - \projC \tilde{A}(\eta,~ \cdot) \nmr_{2}^{2} .  $$
\end{lem}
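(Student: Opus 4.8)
The plan is to convert the infimum defining $V^{\ast}(\eta)$ into a standard least-squares projection in the Hilbert space $\calS$, using \autoref{lem:eta-equiv-lin} to describe how the log-normalizer transforms across an equivalence class. First I would parameterize the class of $\eta$. Assuming as in \autoref{lem:eta-equiv-lin} that $\lspan(\xs) = \R^{d}$, we have $\eta \sim \eta'$ iff $\eta - \eta' \in U_0$, so every equivalent $\eta'$ is obtained by subtracting a common offset $\tilde{\beta} \in \R^{d}$ from each of the $K$ blocks of $\eta$, and $\tilde{\beta}$ sweeps out all of $\R^{d}$ as $\eta'$ ranges over the class. Factoring $e^{\tilde{\beta}^{\top}x}$ out of the exponential sum, exactly as in the proof of \autoref{lem:eta-equiv-lin}, gives $A(\eta', x) = A(\eta, x) - \tilde{\beta}^{\top}x$, so that
$$ V^{\ast}(\eta) = \inf_{\tilde{\beta} \in \R^{d}} \Var_{p(x)}\left[A(X, \eta) - \tilde{\beta}^{\top}X\right] . $$

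Next I would reduce this to a norm minimization over $\calC$. Since variance is invariant to additive constants, I may subtract means freely: writing $\tilde{\beta}^{\top}X = \sum_{j} \tilde{\beta}_{j} w_{j}(X) + \tilde{\beta}^{\top}\E_{p(x)}[X]$ and discarding the constant term, the argument of the variance becomes $\tilde{A}(\eta, X) - \sum_{j} \tilde{\beta}_{j} w_{j}(X)$. Both $\tilde{A}(\eta, \cdot)$ and each $w_{j}$ are centered under $p(x)$, so this function is centered and its variance coincides with its squared $\calS$-norm. Therefore
$$ V^{\ast}(\eta) = \inf_{\tilde{\beta} \in \R^{d}} \nml \tilde{A}(\eta, \cdot) - \sum_{j=1}^{d} \tilde{\beta}_{j} w_{j} \nmr_{2}^{2} = \inf_{c \in \calC} \nml \tilde{A}(\eta, \cdot) - c \nmr_{2}^{2} , $$
where the final equality is just the definition $\calC = \lspan(w_{j})_{1 \leq j \leq d}$.

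Finally, I would invoke the Hilbert-space projection theorem: because $\calC$ is finite-dimensional and hence a closed subspace of $\calS$, the infimum of $\nml \tilde{A}(\eta, \cdot) - c \nmr_{2}^{2}$ over $c \in \calC$ is attained uniquely at $c = \projC \tilde{A}(\eta, \cdot)$ and equals the squared norm of the residual, which is precisely the claimed identity. The substantive steps are the first two—verifying via \autoref{lem:eta-equiv-lin} that the offsets $\tilde{\beta}$ cover all of $\R^{d}$ and that the log-normalizer shifts by the linear term $\tilde{\beta}^{\top}x$, and then the centering observation that replaces variance by the squared $\calS$-norm so that $\calC$ enters as a genuine linear subspace. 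Once these are in place the projection step is immediate, which is what makes the reinterpretation clean enough to be reusable.
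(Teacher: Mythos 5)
Your proposal is correct and follows essentially the same route as the paper's own proof: invoke \autoref{lem:eta-equiv-lin} to reduce the infimum over the equivalence class to an infimum over linear offsets $\tilde{\beta}^{\top}x$, identify $\R^{d}$ with $\calC$ via the centered features $w_{j}$, and conclude by orthogonal projection in $\calS$. Your explicit remark that $\calC$ is finite-dimensional (hence closed, so the projection theorem applies) is a small added precision the paper leaves implicit, but the argument is the same.
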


The second key observation, which we again believe is of independent interest,
is that under certain circumstances, we can completely replace the normalizer
$A(\eta,~\cdot)$ by $\max_{y \in \ys}{\eta^{T}T(y,~x)}$. For this, we define
$$ E_{\infty}(\eta)(x) = \max_{k} \eta^{T}T(k,~x) = \max_{k} \eta^{T}_{k}x $$
and correspondingly let $\bar{E}_{\infty}(\eta) = \E_{p(x)}\left[E_{\infty}(\eta)(x)\right]$. 
\begin{proof}
By Lemma \ref{lem:eta-equiv-lin}, we have
$$ V^{\ast}(\eta) = \inf_{\beta \in \R^{d}} \int_{\R^{Kd}}{\left[A(\eta,~x) - \bar{A}(\eta) - \left(\beta^{T}x - \beta^{T}\E_{p(x)}\left[X\right]\right)\right]^{2} \der p(x)} . $$
But now, we observe that this can be rewritten with the aid of the isomorphism $\R^{d} \simeq \calC$ defined by the identity
$$ \beta^{T}x - \beta^{T}\E_{p(x)}\left[X\right] = \sum_{j} \beta_{j}w_{j}(x) $$
to read
$$ V^{\ast}(\eta) = \inf_{f \in \calC}\int_{\R^{d}}{\left[A(\eta,~x) - \bar{A}(\eta) - f\right]^{2} \der p(x)} = \nml \tilde{A}(\eta,~\cdot) - \projC \tilde{A}(\eta,~ \cdot)\nmr_{2}^{2} , $$ 
as required.
\end{proof}

\begin{lem} \label{lem:AEinf-bd}
Suppose for each $x \in \xs$, there is a unique $k^{\ast} = k^{\ast}(x)$ such that $k^{\ast}(x) = \argmax_{k} \eta^{T}_{k}x$ and such that for
$k \neq k^{\ast}$, $\eta^{T}_{k}x \leq \eta^{T}_{k^{\ast}}x - \Delta$ for some $\Delta > 0$. Then
$$ \sup_{x \in \xs} \left|A(\eta,~x) - \bar{A}(\eta) - \left[E_{\infty}(\eta)(x) - \bar{E}_{\infty}(\eta)\right]\right| \leq Ke^{-\Delta\alpha} . $$
\end{lem}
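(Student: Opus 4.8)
The plan is to use the elementary sandwiching of log-sum-exp between the maximum and the maximum plus $\log K$, and to sharpen the upper side with the separation hypothesis. First I would fix $x \in \xs$, abbreviate $a_k = \eta_k^\top x$, and let $k^\ast = k^\ast(x)$ be the unique maximizing index, so that $E_\infty(\eta)(x) = a_{k^\ast}$. Factoring out the dominant term gives
\[
  A(\eta, x) = a_{k^\ast} + \log\Bigl(1 + \textstyle\sum_{k \neq k^\ast} e^{a_k - a_{k^\ast}}\Bigr),
\]
so the pointwise residual $g(x) \defeq A(\eta, x) - E_\infty(\eta)(x)$ is exactly the logarithm on the right, and is manifestly nonnegative. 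By the separation hypothesis each exponent satisfies $a_k - a_{k^\ast} \leq -\Delta$ (an effective margin of $\alpha\Delta$ once the scaling $\eta = \alpha\eta^0$ of \autoref{thm:norm-var-lo} is in force), so the inner sum is at most $(K-1)e^{-\Delta\alpha}$. Combining this with $\log(1+t) \leq t$, I would obtain the uniform two-sided bound $0 \leq g(x) \leq (K-1)e^{-\Delta\alpha} \leq K e^{-\Delta\alpha}$.

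The second step converts this pointwise control of $g$ into the centered bound the lemma actually asserts. Here I would observe that the expression inside the supremum is precisely the centered residual: writing $\bar g = \E_{p(x)}[g(X)]$, we have
\[
  A(\eta,x) - \bar A(\eta) - \bigl[E_\infty(\eta)(x) - \bar E_\infty(\eta)\bigr] = g(x) - \bar g,
\]
since the centering operators for $A$ and $E_\infty$ combine into the centering of $g$ by linearity of expectation. Because $g$ takes values in the interval $[0, K e^{-\Delta\alpha}]$, its mean $\bar g$ lies in the same interval, so $g(x) - \bar g$ lies in $[-Ke^{-\Delta\alpha}, Ke^{-\Delta\alpha}]$ and hence $|g(x) - \bar g| \leq K e^{-\Delta\alpha}$ for every $x$. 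Taking the supremum over $x \in \xs$ then yields the claim.

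I do not anticipate a genuine obstacle: the argument is structurally a one-line log-sum-exp estimate followed by the observation that a uniformly small, nonnegative function has uniformly small centered fluctuation. The only points demanding care are (i) that the maximizer $k^\ast(x)$ is well defined, which is guaranteed by the uniqueness assumption, and (ii) bookkeeping of the margin so that the separation $\Delta$ in the base direction $\eta^0$ produces the exponent $\Delta\alpha$ after rescaling by $\alpha$. The conceptual crux is simply recognizing that $g = A - E_\infty$ is the relevant object and that passing from its pointwise bound to the centered bound costs nothing beyond the same constant.
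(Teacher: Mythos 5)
Your proposal is correct and follows essentially the same route as the paper's proof: factor out the maximum, bound the log-sum-exp residual by $(K-1)e^{-\Delta\alpha} \leq Ke^{-\Delta\alpha}$ via the margin hypothesis and $\log(1+t)\leq t$, then observe that centering a function with values in $[0, Ke^{-\Delta\alpha}]$ keeps its fluctuations within $\pm Ke^{-\Delta\alpha}$. Your explicit handling of the $\alpha$-scaling bookkeeping (margin $\Delta$ for $\eta^0$ becoming $\Delta\alpha$ for $\alpha\eta^0$) is, if anything, stated more carefully than in the paper.
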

\begin{proof}
Denote by $\tilde{E}_{\infty}$ the centered version of $E_{\infty}$. Using the identity $1 + t \leq e^{t}$, we immediately see that
$$ E_{\infty}(\alpha\eta)(x) \leq A(\alpha\eta,~x) = \alpha E_{\infty}(\eta)(x) + \log\left(1 + \sum_{k \neq k^{\ast}(x)} e^{\left[\eta_{k}^{T}x - E_{\infty}(\eta)(x)\right]}\right) \leq E_{\infty}(\alpha\eta)(x) + Ke^{-\Delta\alpha} . $$
It follows that
$$ \E_{p(x)}\left[E_{\infty}(\alpha\eta)(X)\right] \leq \E_{p(x)}\left[A(\alpha\eta,~X)\right] \leq \E_{p(x)}\left[E_{\infty}(\alpha\eta)(X)\right] + Ke^{-\Delta\alpha} . $$
We thus have
$$ -Ke^{-\Delta\alpha} \leq \tilde{A}(\alpha\eta,~x) - \tilde{E}_{\infty}(\alpha\eta)(x) \leq Ke^{-\Delta\alpha},~~ x \in \xs . $$
The claim follows.
\end{proof}

If we let
$$ \VE^{\ast}(\eta) = \inf_{\eta' \sim \eta} \Var_{p(x)}\left[\tilde{E}_{\infty}(\eta',~X)\right] . $$

\begin{cor}\label{cor:VAst-VE-bd}
For $\alpha > \frac{\log{2K}}{\Delta}$, we have
$$ V^{\ast}(\alpha\eta) \geq \VE^{\ast}(\eta)\alpha^{2} - \left(1 + \VE^{\ast}(\eta)\right)\alpha . $$
\end{cor}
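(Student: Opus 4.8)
The plan is to transfer a variance lower bound that is easy to compute for the piecewise-linear surrogate $E_{\infty}$ over to the genuine log-normalizer $A$, using that both functions live in $\calS = \LQR$, transform identically under the equivalence relation $\sim$, and are uniformly close by Lemma \ref{lem:AEinf-bd}. First I would note that $E_{\infty}$ is affinely covariant under $\sim$ in exactly the same way as $A$: if $\eta' \sim \eta$ with $\eta_{k} - \eta'_{k} = \tilde{\beta}$ (Lemma \ref{lem:eta-equiv-lin}), then $E_{\infty}(\eta)(x) = \tilde{\beta}^{T}x + E_{\infty}(\eta')(x)$, since $\max_{k}(\eta'_{k} + \tilde{\beta})^{T}x = \tilde{\beta}^{T}x + \max_{k}(\eta'_{k})^{T}x$. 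Running the argument of Lemma \ref{lem:Vast-proj} verbatim then yields the projection identity $\VE^{\ast}(\eta) = \nml \tilde{E}_{\infty}(\eta,\cdot) - \projC\tilde{E}_{\infty}(\eta,\cdot)\nmr_{2}^{2}$. Because $E_{\infty}$ is positively homogeneous, $\tilde{E}_{\infty}(\alpha\eta,\cdot) = \alpha\tilde{E}_{\infty}(\eta,\cdot)$, so this residual scales to $\alpha\sqrt{\VE^{\ast}(\eta)}$.

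Next I would compare the two orthogonal projections. Writing $\tilde{A}(\alpha\eta,\cdot) = \tilde{E}_{\infty}(\alpha\eta,\cdot) + r$, Lemma \ref{lem:AEinf-bd} applied to $\alpha\eta$ (whose margin is $\alpha\Delta$) gives $\nml r\nmr_{2} \leq \nml r\nmr_{\infty} \leq Ke^{-\Delta\alpha}$, the $L^{2}\leq L^{\infty}$ step using that $p(x)$ is a probability measure. Applying $\mathrm{Id} - \projC$, invoking Lemma \ref{lem:Vast-proj} for the left-hand side, and expanding the square gives
$$ V^{\ast}(\alpha\eta) = \alpha^{2}\VE^{\ast}(\eta) + 2\langle (\mathrm{Id}-\projC)\tilde{E}_{\infty}(\alpha\eta,\cdot),\,(\mathrm{Id}-\projC)r\rangle + \nml (\mathrm{Id}-\projC)r\nmr_{2}^{2}. $$
Dropping the nonnegative last term, bounding the cross term by Cauchy--Schwarz, and using that $\mathrm{Id}-\projC$ is a contraction ($\nml (\mathrm{Id}-\projC)r\nmr_{2} \leq \nml r\nmr_{2}$) leaves
$$ V^{\ast}(\alpha\eta) \geq \alpha^{2}\VE^{\ast}(\eta) - 2\alpha\sqrt{\VE^{\ast}(\eta)}\,Ke^{-\Delta\alpha}. $$

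Finally I would absorb the cross term via the hypothesis $\alpha > \frac{\log 2K}{\Delta}$, which forces $Ke^{-\Delta\alpha} < \tfrac12$. Writing $v = \VE^{\ast}(\eta)$, the cross coefficient then satisfies $2\sqrt{v}\,Ke^{-\Delta\alpha} < \sqrt{v} \leq 1 + v$ (the last inequality is $t \leq 1 + t^{2}$ at $t = \sqrt{v}$), so $2\alpha\sqrt{v}\,Ke^{-\Delta\alpha} \leq (1+v)\alpha$ and substitution yields $V^{\ast}(\alpha\eta) \geq v\alpha^{2} - (1+v)\alpha$, exactly the claim.

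The one genuinely substantive step is the first: recognizing that $E_{\infty}$ shares both the covariance under $\sim$ and the projection-norm representation of its optimal variance, which is what lets Lemma \ref{lem:AEinf-bd} act as an $L^{2}$ bridge between the tractable quantity $\VE^{\ast}$ and the target $V^{\ast}$. Everything after that is a contraction/Cauchy--Schwarz estimate followed by elementary algebra, with the threshold on $\alpha$ chosen precisely so the exponential remainder is small enough to be swallowed by the $1 + v$ term.
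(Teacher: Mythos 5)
Your proof is correct, and it reaches the corollary by a genuinely different route than the paper's own argument. The paper never leaves the pointwise level: for each $\eta' \sim \eta$ it applies $(a+b)^2 \geq a^2 - 2|a||b|$ with $a = \tilde{E}_{\infty}(\alpha\eta')(x)$, bounds $|b| \leq K\cexp{-\Delta\alpha}$ via Lemma \ref{lem:AEinf-bd}, takes expectations using $\E_{p(x)}[|f(X)|] \leq 1 + \Var_{p(x)}[f(X)]$ for centered $f$, and finally takes the infimum over the equivalence class on both sides. You instead work in the Hilbert space throughout: your key addition is the observation that $E_{\infty}$ transforms under $\sim$ exactly as $A$ does, so the projection identity of Lemma \ref{lem:Vast-proj} holds verbatim for the surrogate, giving $\VE^{\ast}(\eta) = \nml \tilde{E}_{\infty}(\eta) - \projC\tilde{E}_{\infty}(\eta)\nmr_{2}^{2}$; the corollary then follows from an orthogonal decomposition, the contraction property of $\mathrm{Id}-\projC$, and Cauchy--Schwarz. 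This buys two things. First, a slightly tighter intermediate bound, $V^{\ast}(\alpha\eta) \geq \VE^{\ast}(\eta)\alpha^{2} - 2\alpha\sqrt{\VE^{\ast}(\eta)}\,K\cexp{-\Delta\alpha}$, versus the paper's $\VE^{\ast}(\eta)\alpha^{2} - 2\alpha\left(1+\VE^{\ast}(\eta)\right)K\cexp{-\Delta\alpha}$; both collapse to the stated form using $2K\cexp{-\Delta\alpha} < 1$, your version via $\sqrt{v} \leq 1 + v$. Second, and more substantively, in your argument both optimal variances are exact projection residuals, so you never take an infimum of a bound: the paper's closing step of ``taking the infimum over both sides'' tacitly requires its lower bound to be nondecreasing in $\Var_{p(x)}\left[E_{\infty}(\eta')(X)\right]$, i.e.\ $\alpha \geq 2K\cexp{-\Delta\alpha}$, which does not follow from $\alpha > \log(2K)/\Delta$ alone (it does hold wherever the corollary's conclusion is nonvacuous, since that forces $\alpha > 1$, so this is a technical rather than substantive gap). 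Your route sidesteps the issue entirely, at the modest cost of re-deriving the projection identity for $E_{\infty}$.
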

\begin{proof}
For this, observe first that if $\eta' \sim \eta$, then
\begin{align*}
\tilde{A}(\eta',~x)^{2} & \geq \tilde{E}_{\infty}(\alpha\eta')(x)^{2} - 2\left|\tilde{E}_{\infty}(\alpha\eta')(x)\right|\left|\tilde{A}(\eta',~x) - \tilde{E}_{\infty}(\eta')(x)\right| . 
\end{align*}
By linearity of $E_{\infty}(\eta')$ in its $\eta$ argument, and by Lemma \ref{lem:AEinf-bd}, we therefore deduce
$$ \tilde{A}(\eta',~x)^{2} \geq \tilde{E}_{\infty}(\eta')(x)^{2}\alpha^{2} - 2Ke^{-\Delta\alpha}\left|\tilde{E}_{\infty}(\eta')(x)\right|\alpha . $$
Then using the inequality $\E_{p(x)}\left[\left|f(X)\right|\right] \leq 1 + \Var_{p(x)}\left[f(X)\right]$, valid for any $f \in \LQR$ with $\E_{p(x)}\left[f\right] = 0$, we thus deduce
$$ \Var_{p(x)}\left[A(\alpha\eta',~X)\right] \geq \Var_{p(x)}\left[E_{\infty}(\eta')(X)\right]\alpha^{2} - 2Ke^{-\Delta\alpha}\left(1 + \Var_{p(x)}\left[E_{\infty}(\eta')(X)\right]\right)\alpha . $$
Taking the infimum over both sides, we get
$$ V^{\ast}(\eta) \geq \VE^{\ast}(\eta) - 2Ke^{-\Delta\alpha}\left(1 + \VE^{\ast}(\eta)\right)\alpha . $$
\end{proof}

We are now prepared to give the explicit example. It is defined by $\eta_{k} = 0$ if $k > 2$ and
\begin{equation}\label{eq:eta-hard-1}
\eta_{1j} = \begin{cases}
-a &~ \text{if}~ d = 1, \\
\frac{a}{d - 1} &~ \text{o.w.}
\end{cases} 
\end{equation}
and for all $j$, 
\begin{equation}\label{eq:eta-hard-2}
\eta_{2j} = \frac{a}{d(d - 1)} , 
\end{equation}
where
$$ a = \sqrt{1 - \frac{1}{d}} . $$
For convenience, also define
$$ b(x) = \sum_d x_d $$
and observe that
$$ E_{\infty}(\eta)(x) = \begin{cases} 
\frac{ab(x)}{d\left(d - 1\right)} &~ \text{if}~ x_{j} = 1, \\
\frac{ab(x)}{d - 1} &~ \text{o.w.}
\end{cases}, $$

Our goal will be to prove that
$$ 1 \geq \VE^{\ast}(\eta) \geq \frac{1}{32d(d - 1)} . $$
The claim will then follow by the above corollary. 

To see that $\VE^{\ast}(\eta) \leq 1$, we simply note that
$$ \max_{k} \left|\eta^{T}_{k}x \right| \leq a < 1, $$
whence $\Var_{p(x)}\left[\eta^{T}x\right] \leq 1$ as well and we are done.

The other direction requires more work. To prove it, we first prove the following lemma
\begin{lem}\label{lem:Einf2-bd} With $\eta$ defined as in \eqref{eq:eta-hard-1}-\eqref{eq:eta-hard-2}, we have
$$ \inf_{\eta' \sim \eta} \E_{p(x)}\left[E_{\infty}(\eta')(X)^{2}\right] \geq \frac{1}{16d(d - 1)} . $$
\end{lem}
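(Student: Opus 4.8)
The plan is to convert the infimum over the (continuum of) equivalent parameters into a single least-squares problem and then bound the residual away from zero. First I would invoke Lemma~\ref{lem:eta-equiv-lin}, which applies because $\lspan(\xs) = \R^{d}$ (the hypercube $\lbrace 0,1\rbrace^{d}$ contains the standard basis). It says every $\eta' \sim \eta$ has the form $\eta'_{k} = \eta_{k} - \tilde{\beta}$ for a single $\tilde{\beta} \in \R^{d}$, so that $E_{\infty}(\eta')(x) = \max_{k}(\eta_{k} - \tilde{\beta})^{\top}x = E_{\infty}(\eta)(x) - \tilde{\beta}^{\top}x$, and $\tilde{\beta}$ ranges over all of $\R^{d}$ as $\eta'$ ranges over the class. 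Hence
\begin{equation}
  \inf_{\eta' \sim \eta} \E_{p(x)}\left[E_{\infty}(\eta')(X)^{2}\right]
  = \inf_{\tilde{\beta} \in \R^{d}} \E_{p(x)}\left[\left(E_{\infty}(\eta)(X) - \tilde{\beta}^{\top}X\right)^{2}\right],
\end{equation}
which is exactly the squared $L^{2}(p)$-distance from the function $E_{\infty}(\eta)(\cdot)$ to the subspace $W = \lspan\lbrace x_{1}, \dots, x_{d}\rbrace$ of homogeneous linear functions. The lemma is therefore equivalent to the statement that $E_{\infty}(\eta)$ carries an \emph{irreducibly nonlinear} component, of squared norm at least $\tfrac{1}{16d(d-1)}$, that no linear reparametrization can cancel.

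Second, I would isolate that nonlinear component from the piecewise formula for $E_{\infty}(\eta)$ displayed just above the lemma (the two branches switch on the distinguished coordinate $x_{1}$). Expanding both branches, one finds $E_{\infty}(\eta)(x) = (\text{affine in } x) - \tfrac{a}{d}\sum_{j \geq 2} x_{1}x_{j}$, so the only genuinely quadratic terms are the interactions between $x_{1}$ and the remaining coordinates, each with coefficient $-a/d$. The right tool to exploit this is a test function orthogonal to $W$: writing $\bar{x}_{j} = x_{j} - \tfrac12$, the centered products $\bar{x}_{1}\bar{x}_{j}$ ($j \geq 2$) satisfy $\E_{p(x)}[\bar{x}_{1}\bar{x}_{j}\,x_{i}] = 0$ for every $i$, so for $h = \sum_{j \geq 2}\bar{x}_{1}\bar{x}_{j}$ the inner product $\langle E_{\infty}(\eta) - \tilde{\beta}^{\top}x,\, h\rangle$ is independent of $\tilde{\beta}$. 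The Cauchy--Schwarz lower bound $\inf_{\tilde{\beta}}\lVert E_{\infty}(\eta) - \tilde{\beta}^{\top}x\rVert^{2} \geq \langle E_{\infty}(\eta), h\rangle^{2}/\lVert h\rVert^{2}$ then holds uniformly over the equivalence class.

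Finally, I would evaluate the two moments. A direct computation over independent symmetric coordinates gives $\langle E_{\infty}(\eta), h\rangle = -\tfrac{a(d-1)}{16d}$ and $\lVert h\rVert^{2} = \tfrac{d-1}{16}$, so the ratio is $\tfrac{a^{2}(d-1)}{16d^{2}}$; substituting $a^{2} = 1 - \tfrac1d$ yields $\tfrac{(d-1)^{2}}{16d^{3}}$, which is at least $\tfrac{1}{16d(d-1)}$ once $d$ is not too small, and the one remaining linear direction orthogonal to all $x_{i}$, namely $\phi(x) = (d+1) - 2b(x)$, contributes a further nonnegative term that can be added to sharpen the constant. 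The main obstacle is the extraction step in the middle paragraph rather than the arithmetic: because $E_{\infty}$ is a \emph{maximum} of affine forms, one must first verify — using the strict gap by which the distinguished class wins the argmax in each branch, exactly the structure quantified in Lemma~\ref{lem:AEinf-bd} — that the clean piecewise expression (and hence the interaction coefficient $-a/d$) is correct. Once that quadratic coefficient is in hand, everything reduces to routine second-moment calculations on the uniform hypercube.
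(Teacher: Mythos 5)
Your proof is correct, and it takes a genuinely different route from the paper's. Both arguments begin identically: Lemma~\ref{lem:eta-equiv-lin} turns the infimum into the least-squares residual $\inf_{\beta \in \R^{d}}\E_{p(x)}[(E_{\infty}(\eta)(X)-\beta^{\top}X)^{2}]$. From there the paper is combinatorial: it solves for the optimal $\beta_{1}$ in closed form, pairs each $x$ with $x_{1}=0$ against its complement $x^{\neg}$ in coordinates $2,\dots,d$, shows by a case analysis that every pair with $|b(x)-b(x^{\neg})|\geq 3$ contributes at least $a^{2}/(d-1)^{2}$, and finishes by counting such pairs. You instead extract the interaction component of $E_{\infty}(\eta)$ --- your expansion is right, since $E_{\infty}(\eta)(x)=\frac{ab(x)}{d-1}-\frac{a}{d}x_{1}b(x)$ and $x_{1}b(x)=x_{1}+\sum_{j\geq2}x_{1}x_{j}$ on the hypercube --- and hit the residual with Cauchy--Schwarz against the test function $h=\sum_{j\geq2}\bar{x}_{1}\bar{x}_{j}$, which is indeed orthogonal to every coordinate function $x_{i}$ under the uniform distribution. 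Your moments check out: $\langle E_{\infty}(\eta),h\rangle=-\frac{a(d-1)}{16d}$ and $\|h\|_{2}^{2}=\frac{d-1}{16}$, giving the bound $\frac{a^{2}(d-1)}{16d^{2}}=\frac{(d-1)^{2}}{16d^{3}}$. As for what each approach buys: yours is a clean second-moment (Bessel-type) argument whose basic bound exceeds $\frac{1}{16d(d-1)}$ exactly when $(d-1)^{3}\geq d^{2}$, i.e.\ for all $d\geq4$, and adding your second orthogonal direction $\phi(x)=(d+1)-2b(x)$ (one can verify $\phi\perp x_{i}$ for all $i$, $\phi\perp h$, $\langle E_{\infty},\phi\rangle=\frac{a(d-1)}{4d}$, $\|\phi\|_{2}^{2}=d+1$) improves it to $\frac{(d-1)^{2}}{8d^{2}(d+1)}$, covering all $d\geq3$; the paper's counting step, $2^{d-1}-3\cdot2^{d}/\sqrt{3d/2+1}\geq 2^{d-2}$, by contrast only holds once $d\gtrsim 96$. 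Your hedge ``once $d$ is not too small'' is therefore not a defect relative to the paper: some dimension restriction is unavoidable, because at $d=2$ the residual can be computed exactly (the orthocomplement of $\lspan\{x_{1},x_{2}\}$ is precisely $\lspan\{h,\phi\}$) to be $\frac{1}{96}<\frac{1}{32}=\frac{1}{16d(d-1)}$, so the lemma itself only holds for sufficiently large $d$ --- and your argument establishes it on a strictly wider range of $d$ than the paper's own proof does.
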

\begin{proof}
Suppose $\eta_{k} - \eta_{k}' = \beta \in \R^{d}$. We can then write
$$ \inf_{\eta' \sim \eta} \E_{p(x)}\left[E_{\infty}(\eta')(X)^{2}\right] = \inf_{\beta \in \R^{d}} \frac{1}{2^d}\sum_{x \in \hypc}\sum_{x \in \hypc}{\left[E_{\infty}(\eta)(x) - \beta^{T}x\right]^{2}} $$
and we therefore define
\begin{align*}
\mathcal{L}(\beta) & = \sum_{x \in \hypc}\sum_{x \in \hypc}{\left[E_{\infty}(\eta)(x) - \beta^{T}x\right]^{2}} \\
                            & = \sum_{x \colon x_1 = 0}\left[\left(\beta_1 + \beta^{T}x - \frac{a}{d\left(d - 1\right)}\right)^{2} + \left(\frac{ab(x)}{d - 1} - \beta^{T}x\right)^{2} \right] , \\
\end{align*}
noting that
$$ \inf \mathcal{L} = 2^{d} \cdot \inf_{\eta' \sim \eta} \E_{p(x)}\left[E_{\infty}(\eta')(X)^{2}\right] . $$
We therefore need to prove
$$ \mathcal{L} \geq \frac{2^{d - 4}}{d(d - 1)} . $$

Holding $\beta_{2:d}$ fixed, we note that the optimal setting of $\beta_1$ is given by
$$ \beta_1 = -\frac{1}{2}\sum_{j \geq 2}{\beta_j} + \frac{a}{d\left(d - 1\right)} . $$
We can therefore work with the objective
$$ \mathcal{L}(\beta) =  \sum_{x \colon x_1 = 0}{\left[\frac{\left(\beta^{T}x - \beta^{T}x^{\neg}\right)^{2}}{4} + \left(\frac{ab(x)}{d - 1} - \beta^{T}x\right)^{2}\right]} , $$
where we have defined
$$ x^{\neg}_{j} = \begin{cases}
0 &~ \text{if}~ j = 1, \\
1 - x_{j} &~ \text{o.w.}
\end{cases} $$

Grouping into $\lbrace x,~ x^{\neg} \rbrace$ pairs, we end up with
$$ \mathcal{L}(\beta_{2:d}) = \sum_{x \colon x_1 = x_2 = 0}{\left[\frac{\left(\beta^{T}x - \beta^{T}x^{\neg}\right)^{2}}{2} + \left(\frac{ab(x)}{d - 1} - \beta^{T}x\right)^{2} + \left(\frac{ab(x^{\neg})}{d - 1} - \beta^{T}x^{\neg}\right)^{2} \right]} $$
Now, supposing $b(x) \leq \frac{d - 1}{2} - \frac{3}{2}$ or $b(x) \geq \frac{D - 1}{2} + \frac{3}{2}$, we have
$$ \left|b(x^{\neg}) - b(x)\right| = \left|d - 1 - 2b(x)\right| \geq 3 . $$
We will bound the terms that satisfy this property. Indeed, supposing we fix such an $x$, at least one of the following must be true: either 
$$ \max\left(\left(\frac{ab(x)}{d - 1} - \beta^{T}x\right)^{2},~~ \left(\frac{ab(x^{\neg})}{d - 1} - \beta^{T}x^{\neg}\right)^{2}\right) \geq \frac{a^{2}}{\left(d - 1\right)^{2}} , $$
or
$$ \left(\beta^{T}x - \beta^{T}x^{\neg}\right)^{2} \geq \frac{a^{2}}{\left(d - 1\right)^{2}} . $$
Indeed, suppose the first condition does not hold. Then necessarily
$$ \left|\frac{ab(x)}{d - 1} - \beta^{T}x \right| < \frac{a}{d - 1} $$
and
$$ \left|\frac{ab(x^{\neg})}{d - 1} - \beta^{T}x^{\neg}\right| < \frac{a}{d - 1} , $$
so that
$$ \frac{a\left(b(x) - 1\right)}{d - 1} \leq \beta^{T}x \leq \frac{a\left(b(x) + 1\right)}{d - 1} $$
and
$$ \frac{a\left(b(x^{\neg}) - 1\right)}{d - 1} \leq \beta^{T}x \leq \frac{a\left(b(x^{\neg}) + 1\right)}{d - 1} . $$
Now, if $b(x) \geq b(x^{\neg}) + 3$, this immediately implies
$$ \beta^{T}x - \beta^{T}x^{\neg} \geq \frac{a}{d - 1} $$
and, symmetrically, if $b(x^{\neg}) \geq b(x) + 3$, we get
$$ \beta^{T}x^{\neg} - \beta^{T}x \geq \frac{a}{d - 1} . $$
Either way, the second inequality holds, whence the claim. Since there are at least $2^{d - 1} - \frac{3 \cdot 2^{d}}{\sqrt{\frac{3d}{2} + 1}} \geq 2^{d - 2}$ choices of $x$ satisfying the requirements of our line of reasoning, we get $2^{d - 3}$ pairs, whence
$$ \mathcal{L}(\beta_{2:d}) \geq \frac{2^{d - 4}a^{2}}{\left(d - 1\right)^{2}} = \frac{2^{d - 4}}{d\left(d - 1\right)} , $$
as claimed. 
\end{proof}

We can apply this lemma to derive a variance bound, viz.

\begin{lem}\label{lem:VE-lobd} With $\eta$ as in \eqref{eq:eta-hard-1}-\eqref{eq:eta-hard-2}, we have
$$ \VE^{\ast}(\eta) \geq \frac{1}{32d(d - 1)} . $$
\end{lem}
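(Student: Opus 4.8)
The plan is to recognize $\VE^{\ast}(\eta)$ as a projection residual, exactly as in \autoref{lem:Vast-proj} but for $E_{\infty}$ in place of $A$, and then to compare it against the second moment already controlled by \autoref{lem:Einf2-bd}. By \autoref{lem:eta-equiv-lin}, every $\eta' \sim \eta$ satisfies $E_{\infty}(\eta')(x) = E_{\infty}(\eta)(x) - \beta^{T}x$ for some $\beta \in \R^{d}$, and since variance is unchanged by adding a constant, I would first show
\[
  \VE^{\ast}(\eta) = \inf_{\beta \in \R^{d},\, c \in \R} \E_{p(x)}\!\left[\left(E_{\infty}(\eta)(X) - \beta^{T}X - c\right)^{2}\right].
\]
That is, $\VE^{\ast}(\eta)$ is the squared $L^{2}(p)$-distance from $E_{\infty}(\eta)$ to $\lspan\{\mathbf{1}, x_1, \dots, x_d\}$, whereas \autoref{lem:Einf2-bd} controls the distance to the smaller class $\lspan\{x_1,\dots,x_d\}$ with no intercept. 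Since enlarging the fitting class by the single constant direction can only shrink the residual, the entire task is to bound that shrinkage by a factor of two.

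The two classes differ by exactly one dimension, spanned by $g = \mathbf{1} - P\mathbf{1}$, the part of the constant function orthogonal to $\lspan\{x_j\}$. Writing $r$ for the residual of $E_{\infty}(\eta)$ against $\lspan\{x_j\}$, an elementary projection identity gives
\[
  \VE^{\ast}(\eta) = \nml r \nmr_{2}^{2} - \frac{\langle r, g\rangle^{2}}{\nml g\nmr_{2}^{2}},
\]
and since $r$ is orthogonal to $\lspan\{x_j\}$ we have $\langle r, g\rangle = \langle r, \mathbf{1}\rangle$. Thus the target bound $\VE^{\ast}(\eta) \geq \tfrac12 \nml r\nmr_{2}^{2}$ is equivalent to the statement that $r$ makes an angle of at least $45^{\circ}$ with $g$, i.e.\ $\langle r, \mathbf{1}\rangle^{2} \leq \tfrac12 \nml r\nmr_{2}^{2}\,\nml g\nmr_{2}^{2}$. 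Combined with \autoref{lem:Einf2-bd}, which lower-bounds $\nml r\nmr_{2}^{2}$ by $\tfrac{1}{16 d(d-1)}$, this angle condition yields $\VE^{\ast}(\eta) \geq \tfrac{1}{32 d(d-1)}$.

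The main obstacle is that the angle condition is genuinely a structural fact about $E_{\infty}(\eta)$ and is false in general: if $E_{\infty}$ were affine with a nonzero constant term, the intercept would absorb essentially all of the residual and the variance would collapse, violating any factor-of-two bound. What rescues us is that $E_{\infty}(\eta)$ has a real kink on the cube --- conditioning on $x_{1}$, it is linear in $(x_{2},\dots,x_{d})$ with slope $a/(d-1)$ on the slice $\{x_{1}=0\}$ but has a different (here vanishing) slope on $\{x_{1}=1\}$, and no affine function, intercept included, can match both slopes. I would make this precise exactly as in the proof of \autoref{lem:Einf2-bd}: condition on $x_{1}$, optimize the two slice-intercepts independently, and use independence of the coordinates under the uniform $p(x)$ so that both residuals decouple coordinatewise and evaluate in closed form. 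The resulting ratio of the two residuals is $\tfrac{d+1}{2d} \geq \tfrac12$, which is precisely the factor of two claimed; equivalently the explicit closed form for $\VE^{\ast}(\eta)$ already exceeds $\tfrac{1}{32 d(d-1)}$, so this single computation both verifies the angle condition and finishes the proof.
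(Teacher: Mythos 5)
Your proof is correct, and it takes a genuinely different route from the paper's. Both arguments reduce the lemma to \autoref{lem:Einf2-bd} at the cost of a factor of two, but by different mechanisms. The paper discards half the cube: it lower-bounds $\VE^{\ast}(\eta)$ by the corresponding sum restricted to the slice $\{x_{1} = 1\}$, then invokes \autoref{lem:Einf2-bd} on that $(d-1)$-dimensional subcube, the factor $\frac{1}{2}$ coming from the halving and the final bound being $\frac{1}{32(d-1)(d-2)} \geq \frac{1}{32d(d-1)}$. You keep the whole cube and instead account for the one-dimensional difference between the fitting class of \autoref{lem:Einf2-bd} (linear functions $\beta^{T}x$, no intercept) and the fitting class implicit in the variance (affine functions), paying the factor of two through your angle condition, which you verify exactly. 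Your numbers check out: the no-intercept residual is $\frac{(d-1)^{2}}{8d^{2}(d+1)}$, the affine residual is $\VE^{\ast}(\eta) = \frac{(d-1)^{2}}{16d^{3}}$, and their ratio is indeed $\frac{d+1}{2d} \geq \frac{1}{2}$. What your route buys is substantial: the paper's slice step is actually vacuous, because on $\{x_{1} = 1\}$ the function is exactly affine in the remaining coordinates,
$$ E_{\infty}(\eta)(x) = \frac{a}{d(d-1)}\Big(1 + \sum_{j \geq 2} x_{j}\Big) \mpunct , $$
and since $\beta_{1}$ together with the centering constant supplies a free intercept on that slice, the infimum over $\beta$ of the restricted sum is $0$; moreover the restricted function is not the $(d-1)$-dimensional hard instance, so \autoref{lem:Einf2-bd} cannot be applied to it as the paper claims. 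The intercept is precisely what your projection identity tracks explicitly, so your argument repairs this gap rather than reproducing it. Two caveats: your argument inherits whatever implicit large-$d$ requirements \autoref{lem:Einf2-bd} carries (as does the paper's); and your closing remark that the closed form for $\VE^{\ast}(\eta)$ ``already exceeds'' the target holds only when $2(d-1)^{3} \geq d^{3}$, i.e.\ $d \geq 5$, so the conclusion is best phrased through the angle condition combined with \autoref{lem:Einf2-bd}, not through the exact value alone.
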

\begin{proof}
For this, observe that, with $\eta'$ being the value corresponding to $\eta_{k}' - \eta_{k} = \beta$, we have
\begin{align*}
\VE^{\ast}(\eta) = \inf_{\beta} \frac{1}{2^d}\sum_{x \in \hypc}{\tilde{E}_{\infty}(\eta')(x)^{2}} \geq \inf_{\beta} \frac{1}{2^{d}} \sum_{x \in \hypc \colon x_1 = 1}{\tilde{E}_{\infty}(\eta')(x)^{2}} .
\end{align*}
Applying the previous result to the $(D - 1)$-dimensional hypercube on which $x_1 = 1$, we deduce
$$ \VE^{\ast}(\eta) \geq \frac{1}{2} \cdot \frac{1}{16(d - 1)(d - 2)} = \frac{1}{32(d - 1)(d - 2)} \geq \frac{1}{32d(d - 1)} . $$
\end{proof}

\begin{proof}[Proof of Theorem \ref{thm:norm-var-lo} from Lemma \ref{lem:VE-lobd}]
Putting everything together, we see first that
$$ V^{\ast}(\alpha\eta) \geq \VE^{\ast}(\eta)\alpha^{2} - 4e^{-\Delta\alpha}\alpha , $$
where $\Delta = \frac{\sqrt{1 - \frac{1}{d}}}{2(d - 1)}$. But then this implies
$$ V^{\ast}(\alpha\eta) \geq \frac{\alpha^{2}}{32d(d - 1)} - 4e^{-\Delta\alpha}\alpha . $$
On the other hand, $\nml \eta \nmr_{2}^{2} \leq 2$, so $\alpha^{2} = \frac{\nml \alpha\eta \nmr_{2}^{2}}{\nml \eta \nmr_{2}^{2}} \geq \frac{\nml \alpha\eta \nmr_{2}^{2}}{2}, $ whence
$$ V^{\ast}(\alpha\eta) \geq \frac{\left|\left|\alpha\eta\right|\right|_{2}^{2}}{64d(d - 1)} - 4e^{-\frac{\sqrt{1 - \frac{1}{d}}\nml \alpha\eta \nmr_{2}}{2(d - 1)}}\nml \alpha \eta \nmr_{2}, $$
which is the desired result.  
\end{proof}

\end{document}